\documentclass[sigconf,screen,nonacm]{acmart}
\usepackage{booktabs}
\usepackage{multirow}
\usepackage{colortbl}
\usepackage{xcolor}
\usepackage{array}
\usepackage{makecell}
\usepackage{adjustbox}
\definecolor{headerblue}{RGB}{70,175,185}
\definecolor{lightcyan}{RGB}{200,234,236}
\definecolor{lightgray}{RGB}{220,220,220}
\usepackage{graphicx}
\usepackage{subcaption} 
\usepackage{algorithm}
\usepackage{algpseudocode}

\usepackage{booktabs}
\usepackage{multirow}
\usepackage{amsthm}

\theoremstyle{remark}
\newtheorem{remark}{Remark}
\AtBeginDocument{%
  }

\makeatletter
\newcommand{\maketitlesupp}{%
  \begingroup
    \let\saved@abstract\@abstract
    \let\saved@keywords\@keywords
    \let\saved@translatedabstracts\@translatedabstracts
    \let\saved@translatedkeywords\@translatedkeywords
    \let\saved@mkabstract\@mkabstract

    \def\@abstract{}%
    \def\@keywords{}%
    \def\@translatedabstracts{}%
    \def\@translatedkeywords{}%
    \def\@mkabstract{}%

    \maketitle
  \endgroup
}
\makeatother

\begin{document}

\title{SpikeMLLM: Spike-based Multimodal Large Language Models via Modality-Specific Temporal Scales and Temporal Compression}

\author{Han Xu\texorpdfstring{$^{1,2,3,\dagger}$}{} \quad 
Zhiyong Qin\texorpdfstring{$^{1,2,\dagger}$}{} \quad 
Di Shang\texorpdfstring{$^{1,2}$}{} \quad 
Jiahong Zhang\texorpdfstring{$^{1,2}$}{} \quad 
Xuerui Qiu\texorpdfstring{$^{1,2,4}$}{} \\
Bo Lei\texorpdfstring{$^{3}$}{} \quad 
Tiejun Huang\texorpdfstring{$^{3,5}$}{} \quad 
Bo Xu\texorpdfstring{$^{1,2*}$}{} \quad 
Guoqi Li\texorpdfstring{$^{1,2,6,7*}$}{}}

\affiliation{%
  \institution{
    {\small $^1$Institute of Automation, Chinese Academy of Sciences \quad
    $^2$University of Chinese Academy of Sciences}\\
    {\small $^3$Beijing Academy of Artificial Intelligence \quad
    $^4$Zhongguancun Academy \quad
    $^5$Peking University}\\
    {\small $^6$Key Laboratory of Brain Cognition and Brain-inspired Intelligence Technology}\\
    {\small $^7$Spiking Intelligence Lab, Tianqiao \& Chrissy Chen Institute}\\[0.5em]
    {\small $^\dagger$Equal contribution \quad $^*$Corresponding authors: \texttt{xubo@ia.ac.cn}, \texttt{guoqi.li@ia.ac.cn}}
  }
  \country{}
}

\renewcommand{\shortauthors}{Xu et al.}

\begin{abstract}
Multimodal Large Language Models (MLLMs) have achieved remarkable progress but incur substantial computational overhead and energy consumption during inference, limiting deployment in resource-constrained environments. Spiking Neural Networks (SNNs), with their sparse event-driven computation, offer inherent energy efficiency advantages on neuromorphic hardware, yet extending them to MLLMs faces two key challenges: heterogeneous modalities make uniform spike encoding insufficient, and high-resolution image inputs amplify timestep unfolding overhead. We propose \textbf{SpikeMLLM}, the first spike-based framework for MLLMs, which unifies existing ANN quantization methods in the spiking representation space and incorporates \textbf{Modality-Specific Temporal Scales (MSTS)} guided by Modality Evolution Discrepancy (MED) and \textbf{Temporally Compressed LIF (TC-LIF)} for timestep compression from $T=L-1$ to $T=\log_2(L)-1$. Experiments on four representative MLLMs across diverse multimodal benchmarks show that SpikeMLLM maintains near-lossless performance under aggressive timestep compression ($T_v/T_t=3/4$), with average gaps of only $0.72\%$ and $1.19\%$ relative to the FP16 baseline on InternVL2-8B and Qwen2VL-72B. We further develop a dedicated RTL accelerator tailored to the spike-driven datapath, observing $9.06\times$ higher throughput and $25.8\times$ better power efficiency relative to an FP16 GPU baseline under a deployment-oriented co-design setting, suggesting the promise of algorithm--hardware co-design for efficient multimodal intelligence.
\end{abstract}

\keywords{Spiking Neural Networks, Multimodal Large Language Models, Spike-driven, Efficient Inference}

\maketitle
\pagestyle{plain}

\section{Introduction}
\enlargethispage{\baselineskip}

Multimodal Large Language Models (MLLMs) have achieved remarkable progress in multimodal understanding and reasoning, emerging as a core component of modern AI systems 
\cite{yin2024survey,qwen2vl,internvl2,minicpmv,qwenvl}. However, their inference typically incurs substantial computational overhead and energy consumption, posing significant challenges in resource-constrained environments and limiting their potential as intelligent infrastructure.

Spiking Neural Networks (SNNs) transmit information via discrete spikes, transforming dense multiplications into a sparse event-driven paradigm \cite{xu2025neurormorphic,maass1997networks,fang2023spikingjelly,roy2019towards}, which demonstrates significant energy efficiency advantages when combined with neuromorphic hardware \cite{frenkel2023bottom,schuman2022opportunities,kudithipudi2025neuromorphic,pei2019towards}. Moreover, biological neural systems are known to exhibit temporal processing mechanisms, where different sensory modalities often operate at distinct temporal scales \cite{tavanaei2019deep,pfeiffer2018deep}. SNNs inherently possess temporal dynamics, making them mechanistically well-suited for multimodal temporal processing, thus providing a promising paradigm for energy-efficient multimodal intelligent systems.
In recent years, SNNs have expanded from early visual recognition to natural language processing and large-scale language models, with researchers exploring paradigms including direct training \cite{wu2018spatio,eshraghian2023training}, ANN-to-SNN conversion \cite{hu2023spiking,rathi2020enabling}, and integer-to-spike unfolding \cite{yao2024spikev2, lei2025spike2former, yao2025scaling, xu2025neurormorphic} — the latter gaining increasing attention for its compatibility with both sequential and parallel temporal patterns. In the direction of language modeling, recent efforts have begun to scale SNNs to large language models (LLMs) \cite{xu2025neurormorphic,xing2025spikellm}, demonstrating promising performance in language tasks. 
However, existing efforts on scaling SNNs toward LLMs remain largely confined to unimodal text scenarios, and SNNs have yet to be introduced into  MLLMs, where multimodal inputs introduce new challenges and open questions.

Bringing SNNs into  MLLMs introduces two key challenges: \textbf{(i)} Visual and language modalities differ significantly in information density, activation distribution, and computational characteristics. A uniform spike encoding strategy fails to accommodate their distinct requirements — struggling to meet the precision demands of the language modality while introducing unnecessary redundancy for the visual modality. \textbf{(ii)} In the integer-to-spike unfolding paradigm, discrete activations with $L$ quantization levels require $T = L - 1$ timesteps \cite{yao2024spikev2, xu2025neurormorphic}. In MLLMs, high-resolution image inputs generate substantially more tokens than unimodal text-based LLMs, further amplifying temporal unfolding overhead and posing a significant efficiency bottleneck for multimodal inference.

To address these challenges, we propose \textbf{SpikeMLLM}, the first spike-based framework for MLLMs, which unifies existing ANN quantization methods within the spiking representation space. With\-in this framework, we further develop two key mechanisms.
For challenge~\textbf{(i)}, we propose \textbf{Modality-Specific Temporal Scales (MSTS)}, which characterizes cross-layer representational evolution discrepancy across modalities via Modality Evolution Discrepancy (MED), and adaptively allocates differentiated timesteps accordingly --- assigning more timesteps to the more dynamically evolving text modality while reducing timesteps for the spatially redundant visual modality, thereby improving overall performance without increasing effective inference overhead. For challenge~\textbf{(ii)}, we propose \textbf{Temporally Compressed LIF (TC-LIF)}, which 
compresses timesteps from $T=L-1$ to $T=\log_2(L)-1$ via temporal weighted firing while preserving spike-driven sparse addition, substantially reducing spiking inference overhead without sacrificing representational capacity. Experiments on four mainstream MLLMs and scalability validation on Qwen2VL-72B \cite{qwen2vl} demonstrate that SpikeMLLM maintains near-lossless performance relative to the FP16 baseline under aggressive timestep compression (i.e., $T_v/T_t=3/4$ for visual/text modalities), with average performance gaps of only 0.72\% and 1.19\% on InternVL2-8B \cite{internvl2} and Qwen2VL-72B, respectively. 
In addition, to study whether these algorithmic simplifications can translate into practical system benefits, we further develop a dedicated RTL accelerator tailored to the induced spike-driven datapath. Under this deployment-oriented co-design setting, we observe favorable throughput and power characteristics relative to an FP16 GPU reference, suggesting the promise of algorithm–hardware co-design for efficient multimodal intelligence.
The main contributions of this paper are summarized as follows:

\begin{itemize}
    \item We introduce SNNs into MLLMs for the first time, proposing SpikeMLLM, a spike-based framework that unifies existing ANN quantization methods within the spiking representation space.
    \item We propose MSTS, which improves multimodal model performance through modality- and layer-wise differentiated timestep allocation without increasing effective timesteps.
    \item We propose TC-LIF, which compresses the timesteps of integer-to-spike unfolding from $T=L-1$ to $T=\log_2(L)-1$, effectively alleviating temporal unfolding overhead in spiking multimodal inference.
    \item Extensive experiments on four mainstream MLLMs across diverse multimodal reasoning tasks, with scalability further demonstrated on Qwen2VL-72B, validate the effectiveness, compatibility, and scalability of SpikeMLLM under aggressive timestep compression.
    \item We validate the energy efficiency advantages of SpikeMLLM through a dedicated RTL accelerator under hardware-software co-design, providing insights for next-generation neuromorphic chip design.
\end{itemize}

\section{Preliminaries}

\subsection{Spiking Neuron}
SNNs transmit information via discrete spikes in the temporal dimension. The Leaky Integrate-and-Fire (LIF) model is widely adopted for its balance between efficiency and biological plausibility~\cite{maass1997networks,li2023brain}:
\begin{equation}
\mathbf{U}^l[t] = \lambda\mathbf{U}^l[t-1] + \mathbf{X}^l[t] - V_{th} \cdot \mathbf{S}^l[t-1]
\end{equation}
\begin{equation}
\mathbf{S}^l[t] = \Theta(\mathbf{U}^l[t] - V_{th}) \label{eq:lif_spike}
\end{equation}
where $\lambda$ is the leakage coefficient, $V_{th}$ is the firing threshold, and $\Theta(\cdot)$ is the Heaviside step function. At each timestep $t$, the neuron accumulates $\mathbf{X}^l[t]$ into membrane potential $\mathbf{U}^l[t]$ and emits spike $\mathbf{S}^l[t]$ if the threshold is exceeded. The standard non-polar mode has $\mathbf{S}^l[t] \in \{0,1\}$, where 1 denotes an excitatory spike and 0 the resting state. The polar firing mode~\cite{guo2024ternary,xing2024spikelm} extends this to $\mathbf{S}^l[t] \in \{-1,0\}$ or $\{0,1\}$ via a dual-threshold mechanism: an excitatory spike ($+1$) is fired when the membrane potential exceeds $V_{th}$, an inhibitory spike ($-1$) when it falls below $-V_{th}$, and 0 otherwise. This enables richer encoding per timestep while preserving event-driven sparse addition.

\subsection{Integer-to-Spike LIF}
However, both standard and polar LIF neurons produce discrete binary spike outputs, which limits representational capacity and becomes a critical performance bottleneck as task complexity and network scale increase. Prior works allow SNN neurons to emit integer spikes ~\cite{Hu2023FastSNNFS,xing2024spikelm}, but such methods violate the spike-driven nature. To this end, the integer-to-spike paradigm~\cite{yao2024spikev2, xu2025neurormorphic} quantizes membrane potentials into integer-valued spike counts, equivalently unfolded into multi-timestep binary spike sequences during inference, preserving event-driven sparsity while enhancing representational capacity. Eq.~(\ref{eq:lif_spike}) can be written as:
\begin{equation}
\mathbf{S}^l = \lfloor\mathrm{clip}(\mathbf{U}^l, 0, T)\rceil = \sum_{t=1}^{T} \hat{\mathbf{S}}^l[t], \quad \hat{\mathbf{S}}^l[t] \in \{0,1\}
\end{equation}
\begin{equation}
\max \mathbf{S}^l = T, \quad T = L - 1 \label{eq:integer_spike}
\end{equation}
where $\mathrm{clip}(\mathbf{U}^l, 0, T)$ truncates the membrane potential to $[0, T]$, and $\lfloor\cdot\rceil$ denotes rounding. For $L$ quantization levels, the maximum integer count is $L-1$. The next-layer input is:
\begin{equation}
\mathbf{X}^{l+1} = \mathbf{W}^{l+1}\sum_{t=1}^{T}\hat{\mathbf{S}}^l[t]
\end{equation}
Since $\hat{\mathbf{S}}^l[t] \in \{0, 1\}$, this reduces to sparse addition, replacing dense matrix multiplication. The timestep unfolding supports both sequential and parallel spike firing modes~\cite{yao2025scaling}, enabling flexible inference. Concretely, an integer spike count $\mathbf{S}^l \in \{0,\ldots,L-1\}$ is unfolded into $T=L-1$ binary timesteps, so $T$ directly determines the per-layer computational cost; in parallel mode, this manifests as increased arithmetic cost rather than increased latency, up to the available parallelism.

\subsection{From ANN Quantization to Spike Encoding}
Quantization compresses deep neural networks by reducing the numerical precision of weights and activations. Depending on whether retraining is required, it falls into two categories: Post-Training Quantization (PTQ)~\cite{frantar2023gptq,ashkboos2024quarot,xiao2023smoothquant} directly quantizes a pre-trained model using a small calibration dataset without backpropagation; Quantization-Aware Training (QAT)~\cite{liu2023llmqat,jacob2018quantization} incorporates quantization noise during training. As introduced above, the integer-to-spike paradigm maps membrane potentials to bounded integer spike counts $\mathbf{S}^l \in \{0, 1, \ldots, L-1\}$, which shares a structurally consistent discrete form with quantized ANN activations — both are representations over a finite discrete set:
\begin{equation}
\mathbf{S}^l \in \{0, 1, \ldots, L-1\} \leftrightarrow 
Q(\mathbf{U}^l) \in \{0, \Delta, \ldots, (L-1)\Delta\}
\end{equation}
This structural consistency enables existing ANN quantization methods to be consistently instantiated within the spiking representation space. This bridge transfers quantized representations into a spike-driven inference paradigm, thereby preserving the event-driven property of SNNs for neuromorphic deployment while narrowing the SNN--ANN performance gap.

\begin{remark}
For an $A$-bit activation quantizer, let $L=2^A$ denote the number of quantization levels. Under standard integer-to-spike unfolding with $T=L-1$, the binary spike sequence satisfies $\sum_{t=1}^{T}\hat{\mathbf{S}}^l[t]=\mathbf{S}^l$ by construction, yielding the same linear-layer output as the original quantized activation, i.e., $\mathbf{X}^{l+1}=\mathbf{W}^{l+1}\mathbf{S}^l=\mathbf{W}^{l+1}\sum_{t=1}^{T}\hat{\mathbf{S}}^l[t]$. Therefore, the $T=L-1$ configuration serves as the spike-form reference corresponding to $A$-bit activation quantization. The goal of MSTS and TC-LIF is not to reproduce this reference, but to preserve its performance  under aggressive timestep compression. A complete proof is provided in Supplementary Section~\ref{app:reference_equivalence}.
\end{remark}

\section{Method}
\subsection{Overview}

With continuous advances in SNNs and extensive borrowing from ANN methodologies, SNNs have progressively expanded from simple classification to complex sequence and language modeling. Extending SNNs to Multimodal Large Language Models (MLLMs) represents a promising next step.

Existing integer-to-spike methods \cite{yao2024spikev2, xu2025neurormorphic} rely on integer-valued quantization aware training, incurring prohibitive costs at large scale. Although PTQ methods are well-validated in ANNs, their application in spike-based MLLMs remains largely unexplored. We propose \textbf{SpikeMLLM}, a spike-based framework for MLLMs that enables existing ANN quantization methods to be consistently modeled within the spiking representation.

However, two key challenges remain in multimodal settings. First, tokens from different modalities differ significantly in information density and cross-layer evolution, making uniform timestep allocation insufficient. Second, existing integer-to-spike unfolding requires $T=L-1$ timesteps for $L$ discrete levels; in MLLMs, larger token counts and high representational precision requirements further amplify this overhead. To address these, we propose \textbf{Modality-Specific Temporal Scales (MSTS)}, which adaptively allocates timesteps across modalities, and \textbf{Temporally Compressed LIF (TC-LIF)}, which compresses timesteps from $T=L-1$ to $T=\log_2(L)-1$, substantially reducing spike-driven inference overhead.

Different modalities exhibit distinct representational structures and information densities: visual tokens typically contain higher spatial redundancy and show smoother cross-layer evolution, where\-as text tokens, as discrete semantic units, often undergo larger cross-layer changes and are more sensitive to coarse temporal discretization. This is also consistent with biological perceptual systems, where different modalities operate under heterogeneous temporal integration scales~\cite{pfeiffer2018deep,tavanaei2019deep}. Since SNNs naturally encode information through timesteps, we explore whether timestep allocation can be differentiated across modalities.

\begin{figure}[t]
  \centering
  \includegraphics[width=\linewidth]{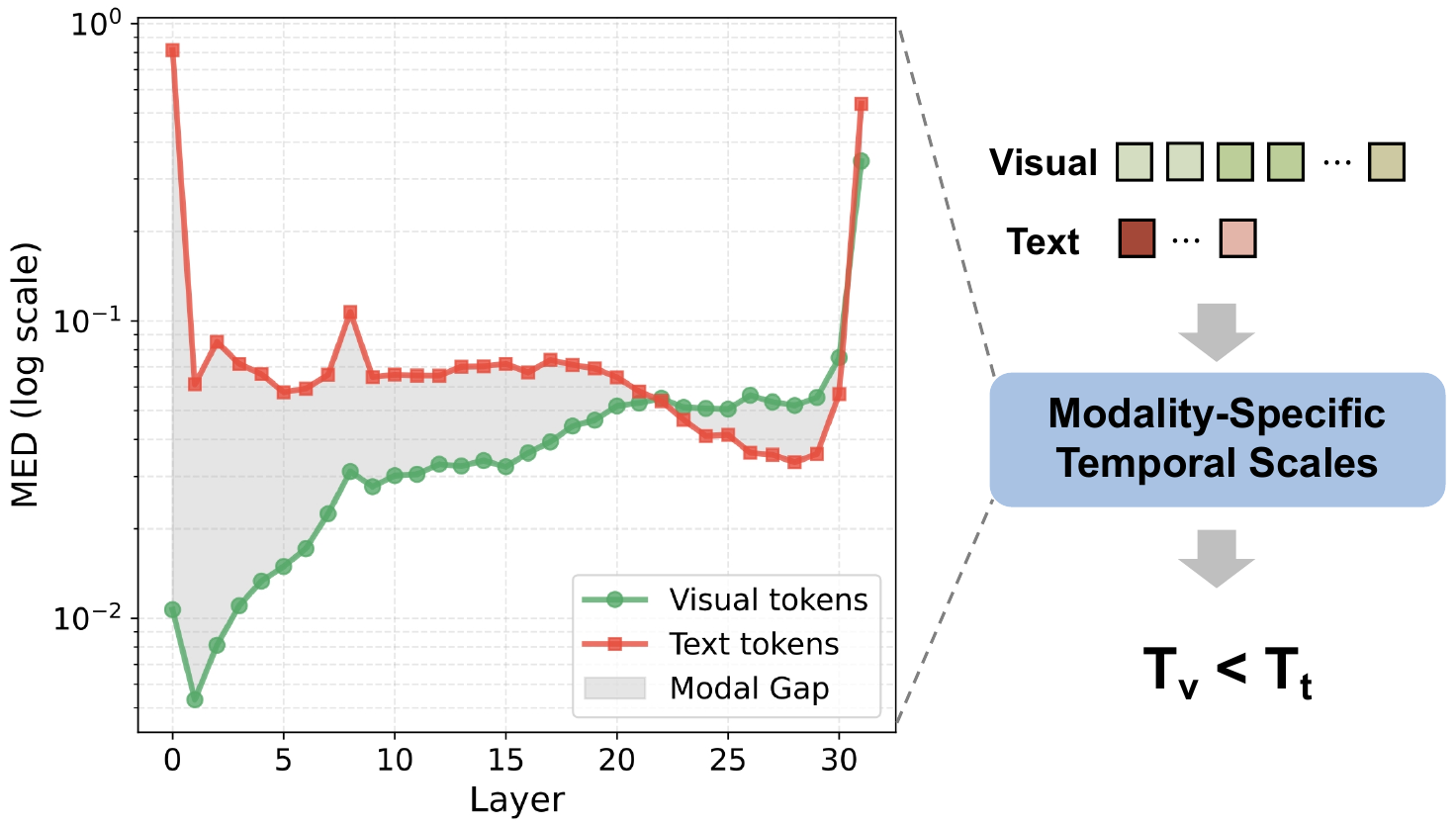}
  \caption{Cross-layer MED discrepancy between visual and text tokens in MLLMs, visualized on Qwen-VL-Chat-9.6B \cite{qwenvl}. Text tokens exhibit higher MED than visual tokens on average, motivating differentiated timestep allocation across modalities in MSTS.}
  \label{fig:med}
\end{figure}

\subsection{Modality-Specific Temporal Scales}
\label{sec:msts}
\paragraph{\textbf{Modality Evolution Discrepancy (MED)}}
In a quantized network, the Transformer updates representations layer by layer through residual connections:
\begin{equation}
h^l = h^{l-1} + Q(F(h^{l-1}))
\end{equation}
where $F(\cdot)$ denotes the Transformer sub-layer and $Q(\cdot)$ is the quantization function. Tokens from different modalities often exhibit different degrees of cross-layer representational change, providing an empirical signal of redundancy and evolution strength. 
Cross-layer cosine similarity has been shown effective for characterizing representational redundancy in Transformers~\cite{he2024matters}. For modality $m \in \{\text{image}, \text{text}\}$, we compute the average cross-layer cosine similarity at layer $l$:
\begin{equation}
\text{Sim}^l_m = \frac{1}{|N_m|}\sum_{i \in N_m}\frac{h^{l-1}_{m,i} \cdot h^l_{m,i}}{\|h^{l-1}_{m,i}\|\|h^l_{m,i}\|}
\end{equation}
where $N_m$ denotes the token set of modality $m$. 

We define the Modality Evolution Discrepancy (MED) as:
\begin{equation}
\text{MED}^l_m = 1 - \text{Sim}^l_m
\end{equation}

A larger $\mathrm{MED}_m^l$ indicates a more significant cross-layer representation update at this layer. For high-MED layers, the error introduced by coarse temporal discretization is more likely to interfere with subsequent representation evolution and propagate through the network, thereby having a larger impact on overall model performance. 
Therefore, modalities or layers with larger MED generally benefit from finer timestep allocation, whereas smaller $\text{MED}^l_m$ suggests higher redundancy and fewer timesteps may suffice. Fig.~\ref{fig:med} illustrates the cross-layer MED discrepancy between visual and text tokens. During prefill, visual tokens substantially outnumber text tokens, so the effective timestep cost is often dominated by the visual modality.

\paragraph{\textbf{MED-Guided Timestep Allocation}}
Based on MED, we adopt a two-level timestep allocation strategy. At the global modality level, we assign differentiated base timesteps according to the overall MED statistics of each modality:
\begin{equation}
T_m =
\begin{cases}
T_t, & \text{if } m = \text{text} \\
T_v, & \text{if } m = \text{visual}
\end{cases},
\quad T_t > T_v
\end{equation}
where $T_t$ and $T_v$ are chosen such that the more dynamically evolving text modality receives more timesteps while the visual modality uses fewer. At the residual-layer level, given a target average timestep budget $\bar{T}_m$, layers of modality $m$ are ranked by MED in descending order, with the top $k_m$ layers assigned higher timestep $T_m$ and the remaining layers assigned lower timestep $T'_m$, satisfying:
\begin{equation}
\frac{k_m T_m + (L_m - k_m) T'_m}{L_m} = \bar{T}_m,
\quad
k_m = \frac{\bar{T}_m - T'_m}{T_m - T'_m} \cdot L_m
\end{equation}
where $L_m$ is the total number of layers for modality $m$. In this way, the limited high-timestep budget is prioritized for layers with larger MED, improving representational fidelity while controlling overall timestep overhead.

\begin{figure}[t]
  \centering
  \includegraphics[width=0.9\linewidth]{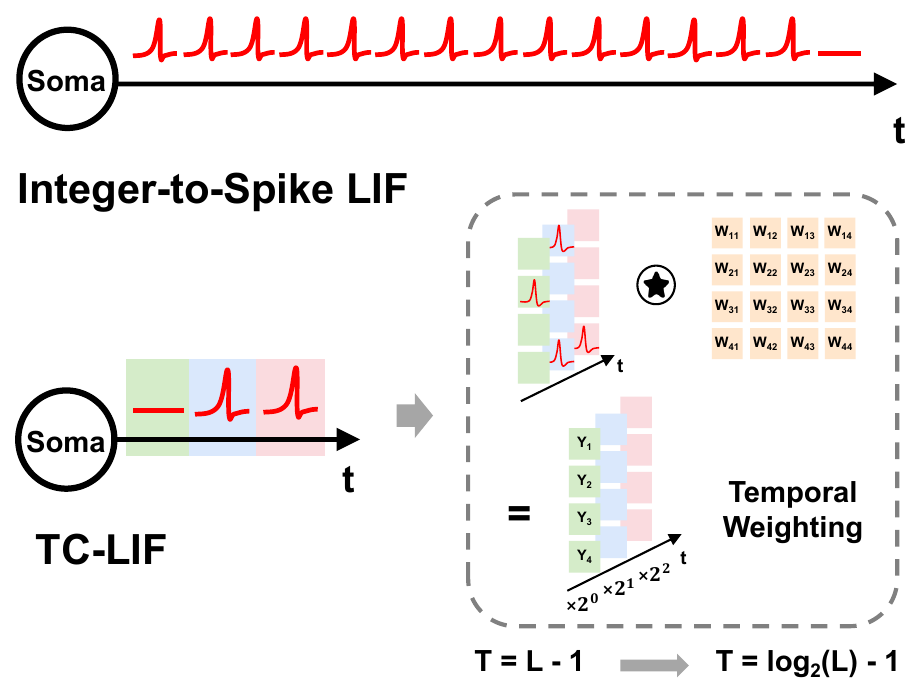}
    \caption{Comparison between Integer-to-Spike LIF and TC-LIF. 
    Integer-to-Spike LIF requires $T = L - 1$ timesteps with uniform 
    spike sequences, whereas TC-LIF reduces timesteps to $T = \log_2(L) - 1$ via temporal weighted integration.}
    \label{fig:tc}
\end{figure}

\subsection{Temporally Compressed LIF}
Conventional integer-to-spike unfolding requires $T=L-1$ timesteps (Eq.~(\ref{eq:integer_spike})) to represent $L$ discrete levels, e.g., $T=15$ for INT4 ($L=16$), leading to substantial temporal overhead in MLLMs. We therefore propose Temporally Compressed LIF (TC-LIF), a spike-preserving temporal reparameterization of integer-to-spike unfolding that reduces the required timesteps from $T=L-1$ to $T=\log_2(L)-1$ while retaining spike-driven sparse accumulation.

\paragraph{\textbf{Spiking Quantization with Symmetric Level Dropping.}}
For modality $m$, we set the quantization scale as $s_m=\frac{\max|\mathbf{U}^{l,m}|}{L_m/2-1}$ and quantize the membrane potential into
\begin{equation}
\mathbf{S}^{l,m} = Q\!\left(\mathbf{U}^{l,m},-L_m/2+1,L_m/2-1\right) = \sum_{t_m=1}^{T_m}2^{t_m-1}\hat{\mathbf{S}}^{l,m}[t_m],
\end{equation}
where $\hat{\mathbf{S}}^{l,m}[t_m]$ denotes a polar binary spike train whose sign is carried by spike polarity. Hence, the temporal code only needs to represent the magnitude. Without level dropping, the magnitude range would be $[0,L_m/2]$, whose maximum $L_m/2=2^{b-1}$ still requires $b$ bits for $L_m=2^b$. By dropping the unreachable level $-L_m/2$, the range becomes $[-L_m/2+1,L_m/2-1]$, so the maximum magnitude is $L_m/2-1=2^{b-1}-1$, which is exactly representable in $b-1$ bits. Therefore, $T_m=b-1=\log_2(L_m)-1$; a formal proof of representational equivalence is provided in Supplementary Section~\ref{app:tclif_equivalence}. This level dropping introduces no extra precision loss, since activating $-L_m/2$ would require a magnitude exceeding $\max |\mathbf{U}^{l,m}|$ under the chosen scaling.

\paragraph{\textbf{Temporal Weighted Integration.}}
Specifically, TC-LIF assigns temporal weight $2^{t_m-1}$ to the spike at timestep $t_m$, such that the interaction with the weight matrix is still realized through spike-driven sparse accumulation rather than dense integer multiplication:
\begin{equation}
\mathbf{X}^{l+1,m}
=\sum_{t_m=1}^{T_m}2^{t_m-1}
\left(\mathbf{W}^{l+1}\hat{\mathbf{S}}^{l,m}[t_m]\right).
\end{equation}
The outputs of different modalities are then concatenated:
\begin{equation}
\mathbf{X}^{l+1}
=\mathrm{Concat}_m\!\left(\mathbf{X}^{l+1,m}\right).
\end{equation}
In this way, TC-LIF reduces the timestep complexity from $L-1$ to $\log_2(L)-1$; each timestep remains a genuine spike-driven sparse addition: when $\hat{S}[t]=0$, no spike is emitted and $\mathbf{W}\hat{S}[t]$ is entirely bypassed, a sparsity that is inherent and pronounced in spiking networks. For INT4 ($L=16$), TC-LIF uses only $T=3$, achieving $5\times$ compression over standard unfolding ($T=15$). The complete procedure is given in Supplementary Algorithm~\ref{alg:tclif}, and a non-polar variant is provided in Supplementary Section~\ref{app:nonpolar}.

\begin{table*}[t]
\centering
\renewcommand{\arraystretch}{1.1}
\setlength{\tabcolsep}{4pt}
\caption{Performance comparison of SpikeMLLM under different configurations across multiple MLLMs and benchmarks. Time Step denotes $T_v/T_t$, the timesteps allocated to visual and text tokens, respectively.}
\label{tab:main_results}
\begin{adjustbox}{width=0.78\linewidth} 
\begin{tabular}{l l c c c c c c c}
\toprule
{{Model}} &
{{Method}} &
{{Spiking}} &
{\makecell{{Time}\\{Step}}} &
{{OCRBench}} &
{{MME}} &
{{TextVQA}} &
{{DocVQA}} &
{{ScienceQA}} \\
\midrule

\multirow{7}{*}{Qwen2VL-7B}
& \cellcolor{lightgray} FP16            & \cellcolor{lightgray} $\times$ & \cellcolor{lightgray} N/A & \cellcolor{lightgray} 835 & \cellcolor{lightgray} 2285 & \cellcolor{lightgray} 84.11 & \cellcolor{lightgray} 93.95 & \cellcolor{lightgray} 85.67 \\
& SpikeMLLM(RTN)                        & $\checkmark$ & 255 & 682  & 1904 & 74.48 & 75.04 & 80.71 \\
& SpikeMLLM(GPTQ)                       & $\checkmark$ & 255 & 760  & 2032 & 80.90 & 89.71 & 83.29 \\
& SpikeMLLM(QuaRot)                     & $\checkmark$ & 7   & 165  & 795  & 3.49  & 3.34  & 21.84 \\
& \cellcolor{lightcyan} SpikeMLLM(QuaRot+MSTS+TC-LIF) & \cellcolor{lightcyan} $\checkmark$ & \cellcolor{lightcyan} 2/3 & \cellcolor{lightcyan} 717 & \cellcolor{lightcyan} 1984 & \cellcolor{lightcyan} 78.28 & \cellcolor{lightcyan} 84.20 & \cellcolor{lightcyan} 80.37 \\
& SpikeMLLM(QuaRot)                     & $\checkmark$ & 15   & 792  & 2196 & 82.64 & 92.50 & 83.54 \\
& \cellcolor{lightcyan} SpikeMLLM(QuaRot+MSTS+TC-LIF) & \cellcolor{lightcyan} $\checkmark$ & \cellcolor{lightcyan} 3/4 & \cellcolor{lightcyan} \textbf{823} & \cellcolor{lightcyan} \textbf{2236} & \cellcolor{lightcyan} \textbf{83.46} & \cellcolor{lightcyan} \textbf{93.31} & \cellcolor{lightcyan} \textbf{85.57} \\
\midrule

\multirow{7}{*}{InternVL2-8B}
& \cellcolor{lightgray} FP16            & \cellcolor{lightgray} $\times$ & \cellcolor{lightgray} N/A & \cellcolor{lightgray} 795 & \cellcolor{lightgray} 2211 & \cellcolor{lightgray} 77.71 & \cellcolor{lightgray} 91.05 & \cellcolor{lightgray} 97.07 \\
& SpikeMLLM(RTN)                        & $\checkmark$ & 255 & 635  & 1859 & 66.75 & 77.71 & 88.50 \\
& SpikeMLLM(GPTQ)                       & $\checkmark$ & 255 & 665  & 1870 & 71.36 & 85.61 & 90.83 \\
& SpikeMLLM(QuaRot)                     & $\checkmark$ & 7   & 432  & 1035 & 42.03 & 51.82 & 44.42 \\
& \cellcolor{lightcyan} SpikeMLLM(QuaRot+MSTS+TC-LIF) & \cellcolor{lightcyan} $\checkmark$ & \cellcolor{lightcyan} 2/3 & \cellcolor{lightcyan} 699 & \cellcolor{lightcyan} 2058 & \cellcolor{lightcyan} 72.41 & \cellcolor{lightcyan} 80.39 & \cellcolor{lightcyan} 92.71 \\
& SpikeMLLM(QuaRot)                     & $\checkmark$ & 15   & 772  & 2157 & 75.84 & 88.92 & 95.98 \\
& \cellcolor{lightcyan} SpikeMLLM(QuaRot+MSTS+TC-LIF) & \cellcolor{lightcyan} $\checkmark$ & \cellcolor{lightcyan} 3/4 & \cellcolor{lightcyan} \textbf{791} & \cellcolor{lightcyan} \textbf{2198} & \cellcolor{lightcyan} \textbf{76.98} & \cellcolor{lightcyan} \textbf{89.85} & \cellcolor{lightcyan} \textbf{96.83} \\
\midrule

\multirow{7}{*}{MiniCPM-V-2.6-8B}
& \cellcolor{lightgray} FP16            & \cellcolor{lightgray} $\times$ & \cellcolor{lightgray} N/A & \cellcolor{lightgray} 850 & \cellcolor{lightgray} 2223 & \cellcolor{lightgray} 78.26 & \cellcolor{lightgray} 89.93 & \cellcolor{lightgray} 96.58 \\
& SpikeMLLM(RTN)                        & $\checkmark$ & 255 & 810  & 1989 & 75.43 & 84.60 & 92.56 \\
& SpikeMLLM(GPTQ)                       & $\checkmark$ & 255 & 827  & 2076 & 77.21 & 87.62 & 95.14 \\
& SpikeMLLM(QuaRot)                     & $\checkmark$ & 7   & 35   & 668  & 13.84 & 6.89  & 16.38 \\
& \cellcolor{lightcyan} SpikeMLLM(QuaRot+MSTS+TC-LIF) & \cellcolor{lightcyan} $\checkmark$ & \cellcolor{lightcyan} 2/3 & \cellcolor{lightcyan} 739 & \cellcolor{lightcyan} 2002 & \cellcolor{lightcyan} 72.56 & \cellcolor{lightcyan} 76.35 & \cellcolor{lightcyan} 89.64 \\
& SpikeMLLM(QuaRot)                     & $\checkmark$ & 15   & 828  & 2069 & 77.35 & 87.04 & 94.15 \\
& \cellcolor{lightcyan} SpikeMLLM(QuaRot+MSTS+TC-LIF) & \cellcolor{lightcyan} $\checkmark$ & \cellcolor{lightcyan} 3/4 & \cellcolor{lightcyan} \textbf{829} & \cellcolor{lightcyan} \textbf{2197} & \cellcolor{lightcyan} \textbf{77.64} & \cellcolor{lightcyan} \textbf{88.80} & \cellcolor{lightcyan} \textbf{95.88} \\
\midrule

\multirow{7}{*}{Qwen-VL-Chat-9.6B}
& \cellcolor{lightgray} FP16            & \cellcolor{lightgray} $\times$ & \cellcolor{lightgray} N/A & \cellcolor{lightgray} 488 & \cellcolor{lightgray} 1852 & \cellcolor{lightgray} 60.47 & \cellcolor{lightgray} 57.96 & \cellcolor{lightgray} 67.58 \\
& SpikeMLLM(RTN)                        & $\checkmark$ & 255 & 432  & 1210 & 49.26 & 47.84 & 61.13 \\
& SpikeMLLM(GPTQ)                       & $\checkmark$ & 255 & 429  & 1648 & 54.41 & 50.12 & 63.06 \\
& SpikeMLLM(QuaRot)                     & $\checkmark$ & 7   & 10   & 42   & 0.10  & 0.26  & 9.18  \\
& \cellcolor{lightcyan} SpikeMLLM(QuaRot+MSTS+TC-LIF) & \cellcolor{lightcyan} $\checkmark$ & \cellcolor{lightcyan} 2/3 & \cellcolor{lightcyan} 392 & \cellcolor{lightcyan} 1617 & \cellcolor{lightcyan} 49.30 & \cellcolor{lightcyan} 43.01 & \cellcolor{lightcyan} 62.47 \\
& SpikeMLLM(QuaRot)                     & $\checkmark$ & 15   & 453  & 1731 & 54.78 & 51.51 & 63.06 \\
& \cellcolor{lightcyan} SpikeMLLM(QuaRot+MSTS+TC-LIF) & \cellcolor{lightcyan} $\checkmark$ & \cellcolor{lightcyan} 3/4 & \cellcolor{lightcyan} \textbf{479} & \cellcolor{lightcyan} \textbf{1815} & \cellcolor{lightcyan} \textbf{58.73} & \cellcolor{lightcyan} \textbf{55.42} & \cellcolor{lightcyan} \textbf{68.07} \\
\bottomrule
\end{tabular}
\end{adjustbox}
\end{table*}

\subsection{Co-designed Accelerator}
Importantly, TC-LIF does more than reduce the number of timesteps. By expressing quantized activations through temporally weighted binary spike components, it also restructures the underlying linear-layer computation into a multi-spike temporal accumulation process with shift-based temporal weighting. This means that, instead of relying on a general dense multiplication datapath, the induced execution can be realized through gated accumulation and lightweight sign handling. When combined with MSTS, which further lowers the effective temporal workload in multimodal inference, the resulting computation pattern becomes particularly attractive for deployment-oriented hardware specialization. This observation motivates the co-designed accelerator described next.
\begin{figure}[t]
  \centering
  \includegraphics[width=\linewidth]{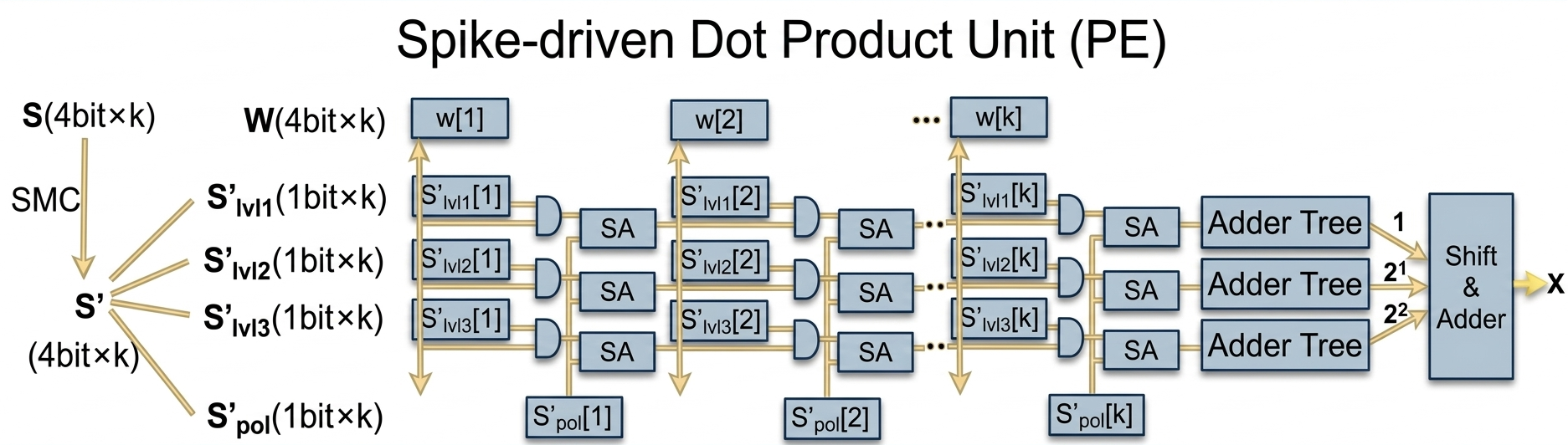}
  \caption{Microarchitecture of Spike-driven Dot-product PE}
  \label{fig:pemicro0}
\end{figure}

Fig.~\ref{fig:pemicro0} illustrates the microarchitecture of the Spike-driven Dot Product Unit (PE), the core compute primitive of the proposed co-designed accelerator, with $T=3$ as an example. The PE designs for other values of $T$ follow the same principle. The input spike vector $S$ is first converted from two's-complement form to its sign-magnitude representation $S'$ through sign-magnitude conversion (SMC), and then organized into four binary spike components. Among them, $S'_{\text{lvl1}}$, $S'_{\text{lvl2}}$, and $S'_{\text{lvl3}}$ represent the three binary significance levels of the input magnitude, while $S'_{\text{pol}}$ represents the spike polarity. In parallel, the weight vector $W=\{w[1], w[2], \dots, w[k]\}$ is processed element-wise along with the corresponding spike component. For each magnitude component, the contribution of $w[i]$ is conditionally activated by the associated spike bit through an AND operation, while sign adjustment (SA) logic, controlled by the polarity component, converts each selected term into either $+w[i]$ or $-w[i]$. The resulting signed terms are reduced by an adder tree to produce the partial sum of the current component. Partial sums from all magnitude components are then left-shifted according to the level index and accumulated through the Shift \& Adder module to generate the final dot-product result $X$. In this way, the PE realizes spike-driven dot products through a multi-spike temporal accumulation process, avoiding dedicated multipliers in favor of spike gating and significance-aware accumulation. 

Owing to its regular structure, this vector-level primitive can be naturally tiled into an array for matrix computation. Building on this computation pattern, we further construct a complete accelerator architecture for SpikeMLLM, while deferring the full system description to Supplementary Section~\ref{hardwaredetail}.

\section{Experiments}
For the exploration of introducing SNNs into multimodal large language models, we evaluate SpikeMLLM from three dimensions: (i) the compatibility of the framework with existing ANN quantization methods and overall performance on multimodal tasks; (ii) the accuracy–efficiency trade-off of MSTS modality-specific timestep allocation; (iii) the effectiveness of TC-LIF in preserving representational capacity under aggressive timestep compression.

\paragraph{\textbf{Models and Datasets}}
We evaluate SpikeMLLM on four multimodal large language models, including Qwen2VL-7B, InternVL2-8B, MiniCPM-V-2.6-8B~\cite{minicpmv}, and Qwen-VL-Chat-9.6B. Evaluations are conducted on five benchmarks: TextVQA~\cite{singh2019towards} and DocVQA~\cite{mathew2021docvqa} assess text understanding in natural scenes and document images, respectively; OCRBench~\cite{liu2023ocrbench} provides a comprehensive evaluation of OCR-related capabilities; MME~\cite{fu2023mme} evaluates models across 14 subtasks spanning perception and cognition; ScienceQA~\cite{lu2022learn} measures multimodal scientific reasoning ability. We further conduct scalability validation on Qwen2VL-72B.

\paragraph{\textbf{Baselines and Implementation Details}}
SpikeMLLM is compatible with existing ANN quantization methods, including RTN, GPTQ~\cite{frantar2023gptq}, and QuaRot~\cite{ashkboos2024quarot}. 
We adopt W4 quantization for LLM weights and W6 for the visual encoder.
We incorporate the conventional integer-to-spike unfolding method~\cite{yao2024spikev2, lei2025spike2former, yao2025scaling, xu2025neurormorphic} as the spiking baseline. As QuaRot is more robust to activation outliers and introduces minimal overhead, we further validate MSTS and TC-LIF based on SpikeMLLM(QuaRot). Following Remark~1, SpikeMLLM(QuaRot) at $T=15$ ($L=16$) serves as the spike-form reference corresponding to A4 activation quantization in our framework. 
The calibration dataset consists of 256 randomly selected samples from the training splits of the corresponding benchmarks~\cite{liu2023ocrbench, mathew2021docvqa, singh2019towards}.

\begin{table}[t]
\centering
\Huge
\setlength{\tabcolsep}{3pt}
\renewcommand{\arraystretch}{1.15}
\caption{Performance comparison of SpikeMLLM under different configurations on Qwen2VL-72B.}
\label{tab:72b}
\begin{adjustbox}{width=1\linewidth} 
\begin{tabular}{lccccccc}
\toprule
Method & Spiking & \makecell{{Time}\\{Step}} & OCRBench & MME & TextVQA & DocVQA & ScienceQA \\
\midrule
\rowcolor{lightgray}
FP16 & $\times$ & N/A & 831 & 2480 & 85.47 & 95.94 & 91.67 \\
SpikeMLLM(SQ) & $\checkmark$ & 255 & 678 & 2419 & 78.75 & 83.50 & 88.30 \\
SpikeMLLM(MBQ) & $\checkmark$ & 255 & 791 & 2331 & 83.09 & 94.41 & 87.80 \\
\rowcolor{lightcyan}
\makecell[l]{SpikeMLLM\\(QuaRot+MSTS+TC-LIF)} & $\checkmark$ & 3/4 & \textbf{817} & \textbf{2451} & \textbf{84.88} & \textbf{95.54} & \textbf{89.84} \\
\bottomrule
\end{tabular}
\end{adjustbox}
\end{table}

\begin{table}[t]
\centering
\Huge
\setlength{\tabcolsep}{3pt}
\renewcommand{\arraystretch}{1.1}
\caption{Analysis of MED-guided layer-wise timestep allocation on Qwen2VL-7B and InternVL2-8B. MSTS-reverse assigns higher timesteps to lower-MED layers under the same budget. Additional results are provided in Supplementary Table~\ref{tab:reverse}.}
\label{tab:msts_layer}
\resizebox{\linewidth}{!}{
\begin{tabular}{lcccccccc}
\toprule
Method & Spiking & \makecell{{Time}\\{Step}} & OCRBench & MME & TextVQA & DocVQA & ScienceQA & Avg gap \\
\midrule
\rowcolor{lightgray}
Qwen2VL-7B-FP16 & $\times$ & N/A & 835 & 2285 & 84.11 & 93.95 & 85.67 & -- \\
MSTS & $\checkmark$ & 2/3 & 717 & 1984 & 78.28 & 84.20 & 80.37 & 10.15 \\
MSTS & $\checkmark$ & 2.3/3.3 & 765 & 2109 & 80.97 & 89.12 & 81.95 & 5.86 \\
MSTS-reverse & $\checkmark$ & 2.3/3.3 & 752 & 2083 & 80.07 & 87.56 & 81.71 & 7.00 \\
MSTS & $\checkmark$ & 3/4 & 823 & 2236 & 83.46 & 93.31 & 85.57 & 1.03 \\
\midrule
\rowcolor{lightgray}
InternVL2-8B-FP16 & $\times$ & N/A & 795 & 2211 & 77.71 & 91.05 & 97.07 & -- \\
MSTS & $\checkmark$ & 2/3 & 699 & 2058 & 72.41 & 80.39 & 92.71 & 8.40 \\
MSTS & $\checkmark$ & 2.3/3.3 & 741 & 2117 & 74.39 & 85.66 & 94.99 & 4.67 \\
MSTS-reverse & $\checkmark$ & 2.3/3.3 & 723 & 1980 & 73.63 & 81.49 & 93.36 & 7.82 \\
MSTS & $\checkmark$ & 3/4 & 791 & 2198 & 76.98 & 89.85 & 96.83 & 0.72 \\
\bottomrule
\end{tabular}}
\end{table}

\subsection{Main Results}

\paragraph{\textbf{Overall Performance}}
SpikeMLLM achieves near-lossless performance under aggressive timestep reduction across multiple MLLM\allowbreak s. Table~\ref{tab:main_results} presents the overall performance across four MLLMs. SpikeMLLM(RTN) and SpikeMLLM(GPTQ) achieve competitive accuracy at $T=255$, validating the compatibility of the framework with diverse ANN quantization methods. However, their reliance on extremely long timesteps leads to inefficient temporal unfolding, diminishing the efficiency advantage of spike-driven computation. SpikeMLLM(QuaRot) achieves competitive performance at $T=15$; however, at the low timestep configuration $T=7$, all models suffer significant performance degradation --- for instance, the OCRBench score of MiniCPM-V-2.6-8B drops to 35 --- indicating that aggressively reducing timesteps under conventional integer-to-spike unfolding significantly degrades representation capacity in multimodal settings. After incorporating MSTS and TC-LIF, at $T_v/T_t=3/4$, the average performance gap to the FP16 baseline (across five benchmarks) is only 1.03\% on Qwen2VL-7B and 0.72\% on InternVL2-8B, while reducing the timestep from 15 to $3/4$. This trend holds consistently across all four models. At $T_v/T_t=2/3$, the performance degradation remains controlled despite further timestep compression, demonstrating a favorable accuracy--efficiency trade-off.

\paragraph{\textbf{Performance on Large Models}}
Table~\ref{tab:72b} presents the results of SpikeMLLM on Qwen2VL-72B. SpikeMLLM maintains strong scalability to large models, achieving only a 1.19\% average gap to the FP16 baseline at $T_v/T_t=3/4$. We further validate the compatibility of SpikeMLLM with quantization methods including SmoothQuant (SQ)~\cite{xiao2023smoothquant} and MBQ~\cite{li2025mbq} at the 72B scale, where MBQ is an ANN quantization method specifically designed for multimodal models. SpikeMLLM (QuaRot+MSTS+TC-LIF) consistently outperforms SpikeMLLM(MBQ) across all five benchmarks, demonstrating that SpikeMLLM maintains strong compatibility and effectiveness at larger model scales.

\paragraph{\textbf{Fine-grained Timestep Allocation}}
Table~\ref{tab:msts_layer} reports performance under different $T_v/T_t$ budgets, where fractional values (e.g., 2.x/3.x) indicate layers mix two adjacent timestep levels. Building on modality-level allocation, fine-grained layer-wise mixing further reduces the average relative gap to FP16 under similar effective timesteps: for Qwen2VL-7B, $T_v/T_t=2/3 \rightarrow 2.3/3.3$ narrows the gap from 10.15\% to 5.86\%, with consistent trends on InternVL2-8B. Under the same budget, MSTS-reverse consistently underperforms MED-guided allocation, confirming that the gain comes from MED-guided layer ordering rather than budget level alone.

\subsection{Ablation Study}
We evaluate MSTS and TC-LIF against the specific question each component addresses. 
MSTS is ablated across a continuous range of modality-specific allocation ratios 
at two budget levels (Figs.~\ref{fig:msts_high}--\ref{fig:msts_low}).
TC-LIF is ablated under a fixed timestep budget (Table~\ref{tab:tclif_ablation}) 
and across increasing timestep allocations (Fig.~\ref{fig:tclif}); we further analyze spike-driven efficiency in terms of firing rate and FLOPs reduction.

\paragraph{\textbf{Effect of MSTS}}
We analyze the effect of different modality-specific timestep allocation ratios $T_v/T_t$ on Qwen2VL-7B and InternVL2\allowbreak -8B, with results shown in Figs.~\ref{fig:msts_high} and~\ref{fig:msts_low}. Experiments demonstrate that MSTS achieves a superior performance--efficiency trade-off by redistributing timesteps across modalities. During prefill, where visual tokens substantially outnumber text tokens, the effective inference cost is largely determined by $T_v$; thus $T_v/T_t = 3/4$ incurs similar inference cost to equal allocation ($T_v = T_t = 3$), while $T_v/T_t = 4/3$ is comparable in cost to equal allocation ($T_v = T_t = 4$). At equal or lower inference cost, $T_v/T_t = 3/4$ consistently outperforms equal allocation ($T_v = T_t = 3$) across all five benchmarks on both models, and also surpasses the more expensive $T_v/T_t = 4/3$; specifically, its average performance gap relative to FP16 is only 1.03\% and 0.72\% on Qwen2VL-7B and InternVL2-8B respectively, compared to 3.28\% and 2.34\% for $T_v/T_t = 4/3$, demonstrating that better performance can be achieved at lower computational cost. Under low timestep budgets ($T_v \in \{2, 3\}$), this effect is even more pronounced: insufficient text timesteps (e.g., $T_v/T_t = 3/2$) cause severe performance degradation (OCRBench: 32, DocVQA: 5.31), whereas allocating more timesteps to the text modality at the same cost (e.g., $T_v/T_t = 2/3$) substantially recovers performance (OCRBench: 717, DocVQA: 84.20). These results consistently demonstrate that the text modality is more sensitive to timestep allocation, and transferring timesteps from visual to text while keeping the prefill cost constant is the key to achieving better performance, a finding that directly motivates the design of MSTS.

\begin{figure}[t]
  \centering
  \includegraphics[width=\linewidth]{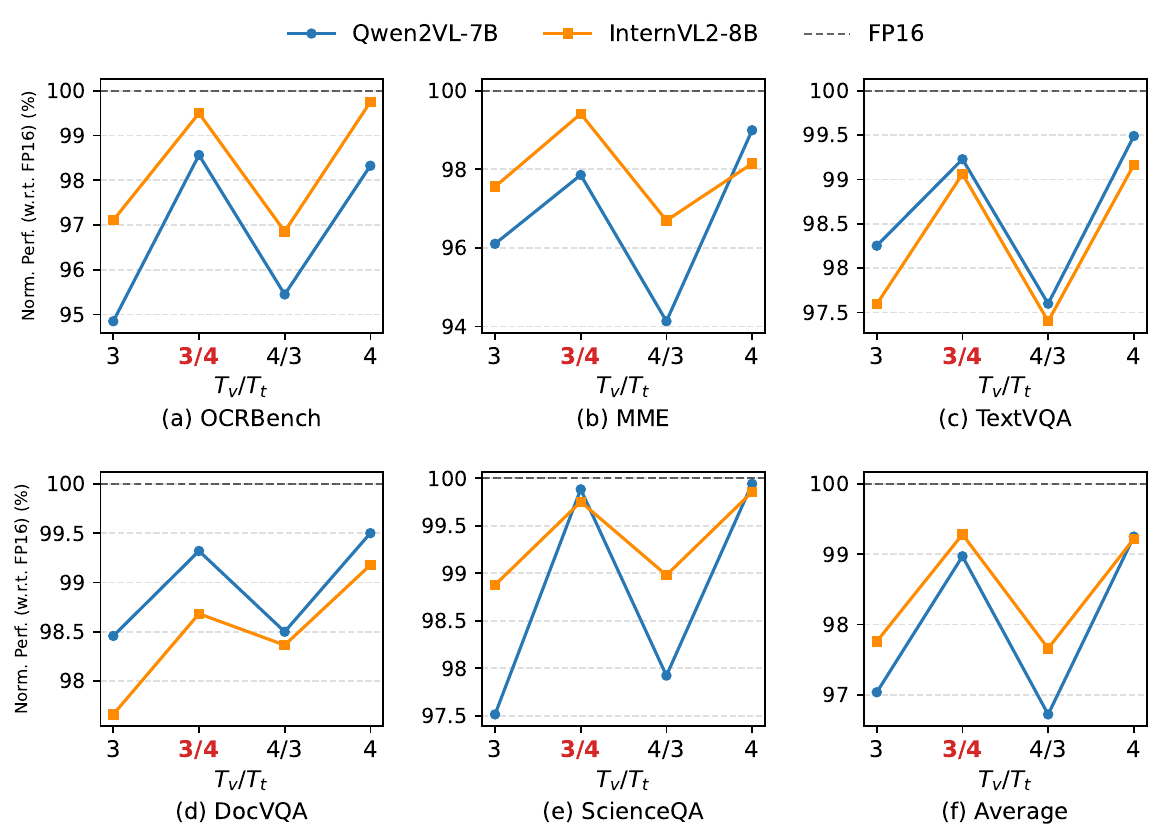}
  \caption{Ablation study of MSTS: Normalized performance of SpikeMLLM(QuaRot+MSTS+TC-LIF) under different modality-specific timestep allocations $T_v/T_t$ ($T \in \{3, 4\}$) on Qwen2VL-7B and InternVL2-8B.
  }
  \label{fig:msts_high}
\end{figure}

\begin{figure}[t]
  \centering
  \includegraphics[width=\linewidth]{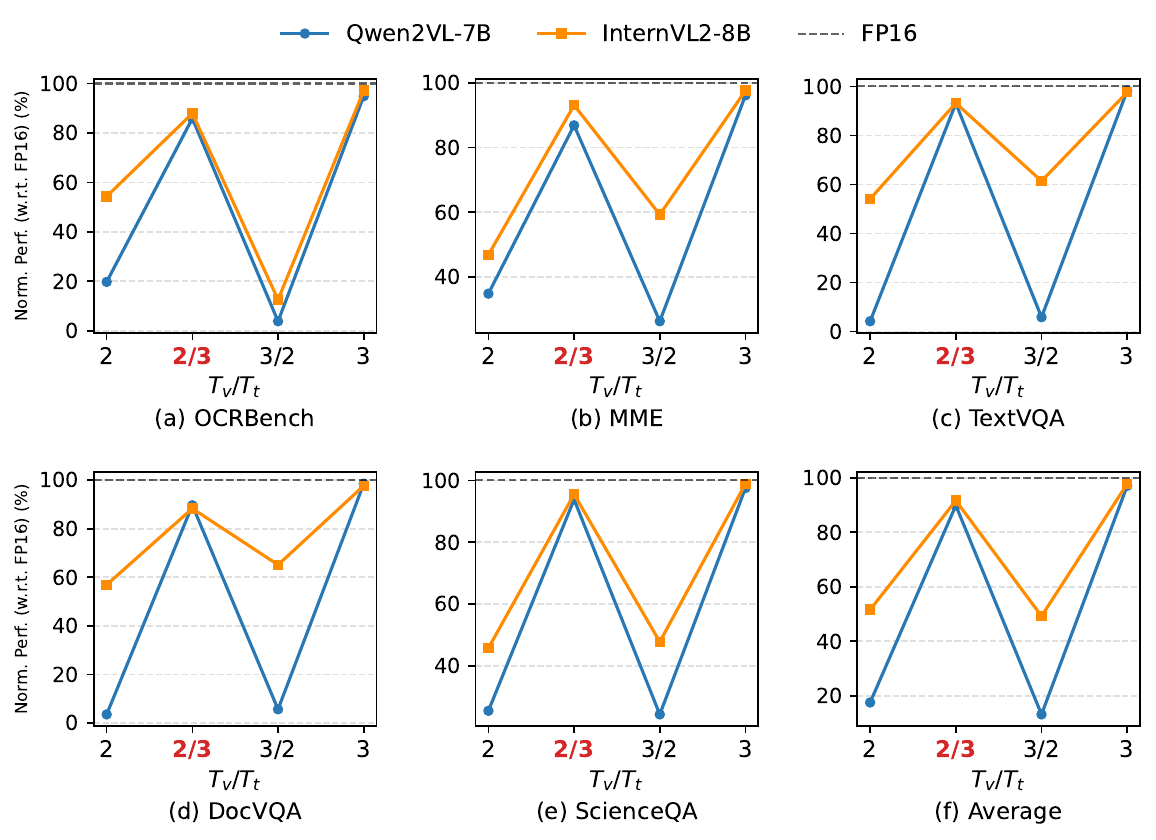}
  \caption{Ablation study of MSTS: Normalized performance of SpikeMLLM(QuaRot+MSTS+TC-LIF) under different modality-specific timestep allocations $T_v/T_t$ ($T \in \{2, 3\}$) on Qwen2VL-7B and InternVL2-8B.}
  \label{fig:msts_low}
\end{figure}

\paragraph{\textbf{Effect of TC-LIF}}
We first validate the necessity of TC-LIF by comparing it against standard integer-to-spike unfolding at the same timestep budget $T_v/T_t = 3/4$, with results shown in Table~\ref{tab:tclif_ablation}. Under standard unfolding, performance collapses completely on both models (OCRBench: 2, TextVQA: 0 on Qwen2VL-7B; OCRBench: 0, TextVQA: 0 on InternVL2-8B), since standard unfolding at T=3 is insufficient to preserve representational capacity. TC-LIF recovers performance to near-FP16 levels at the same timestep budget, confirming that temporal weighted integration is the key to enabling aggressive timestep compression without sacrificing representational capacity.

Fig.~\ref{fig:tclif} shows the normalized performance of Qwen2VL-7B and InternVL2-8B across benchmarks under different $T_v/T_t$ configurations relative to FP16. In the range from $T_v/T_t = 2/3$ to $T_v/T_t = 3/4$, performance improves notably across all benchmarks as the timestep budget increases, gradually approaching the FP16 baseline. $T_v/T_t = 3/4$ is close to the performance saturation point, beyond which the marginal performance gain from additional timesteps diminishes --- from $T_v/T_t = 3/4$ to $T_v/T_t = 7$, the average relative gap across five benchmarks narrows only from 1.03\% to 0.32\% on Qwen2VL-7B, while the effective timestep increases considerably. Furthermore, the convergence trends across benchmarks are highly consistent on both models, demonstrating the stability of the TC-LIF temporal compression mechanism across different types of multimodal tasks. These results suggest that $T_v/T_t = 3/4$ achieves a favorable trade-off between performance and efficiency, and we adopt it as the default configuration.

\begin{table}[t]
\centering
\Large
\setlength{\tabcolsep}{3pt}
\renewcommand{\arraystretch}{1.1}
\caption{Ablation of TC-LIF against standard integer-to-spike unfolding at $T_v/T_t = 3/4$.}
\label{tab:tclif_ablation}
\begin{adjustbox}{width=1\linewidth} 
\begin{tabular}{lc cc cc}
\toprule
\multirow{2}{*}{Method} & \multirow{2}{*}{Time Step} & \multicolumn{2}{c}{Qwen2VL-7B} & \multicolumn{2}{c}{InternVL2-8B} \\
\cmidrule(lr){3-4} \cmidrule(lr){5-6}
& & OCRBench & TextVQA & OCRBench & TextVQA \\
\midrule
\rowcolor{lightgray}
FP16 & N/A & 835 & 84.11 & 795 & 77.71  \\
SpikeMLLM(QuaRot+MSTS) & 3/4 & 2 & 0 & 0 & 0 \\
SpikeMLLM(QuaRot+MSTS+TC-LIF) & 3/4 & 823 & 83.46 & 791 & 76.98 \\
\bottomrule
\end{tabular}
\end{adjustbox}
\end{table}

\begin{figure}[t]
  \centering
  \includegraphics[width=\linewidth]{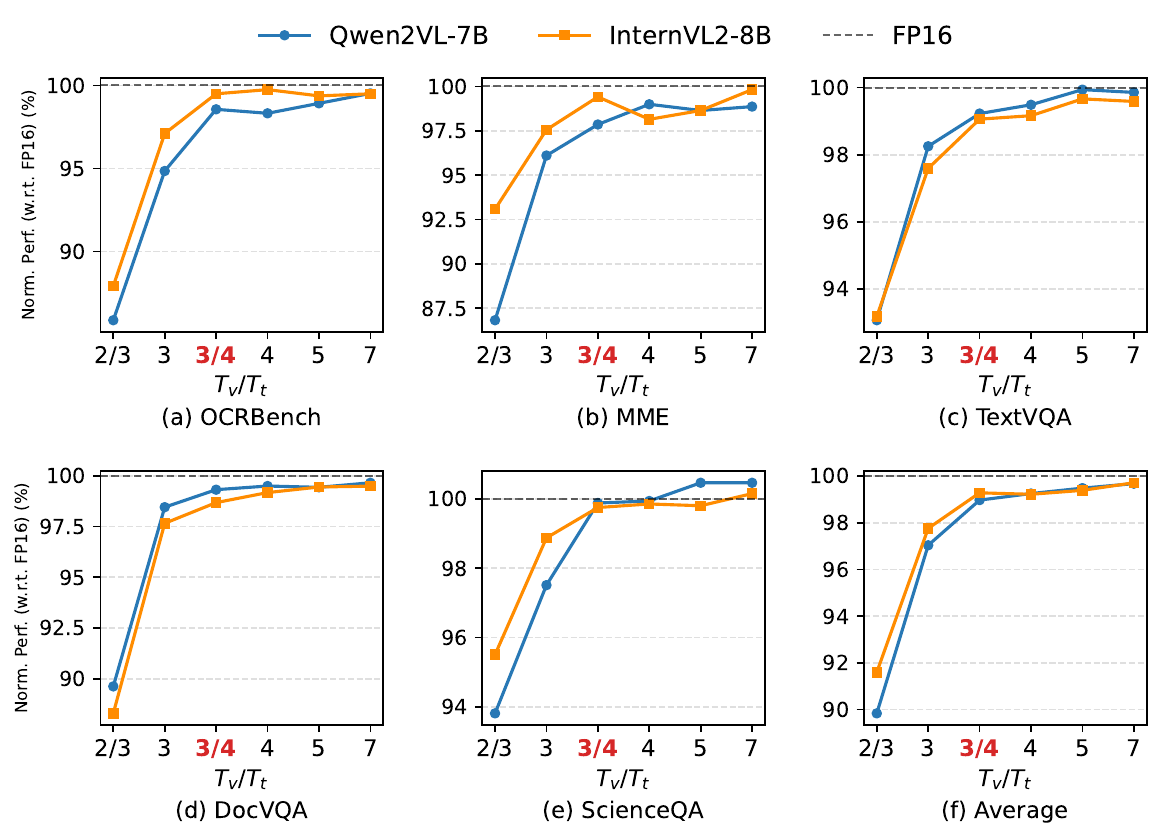}
  \caption{Ablation study of TC-LIF: Normalized performance of SpikeMLLM(QuaRot +MSTS+TC-LIF) as timestep $T_v/T_t$ increases on Qwen2VL-7B and InternVL2-8B.}
  \label{fig:tclif}
\end{figure}

\paragraph{\textbf{Spiking  Efficiency Analysis}}
Fig.~\ref{fig:firing_rate} shows the per-layer firing rate distribution on Qwen2VL-7B under two configurations. SpikeMLLM(QuaRot) ($T=15$) exhibits an average firing rate of 0.53, while SpikeMLLM(QuaRot+MSTS+TC-LIF) ($T_v/T_t=3/4$) reduces it to 0.31, reflecting the reduced temporal workload under MSTS and TC-LIF. Table~\ref{tab:efficiency} reports FLOPs on Qwen2VL-7B under a representative setting of 1024$\times$1024 resolution input and 50 text tokens, where SpikeMLLM(QuaRot+MSTS+TC-LIF) ($T_v/T_t=3/4$) achieves a 4.4$\times$ reduction compared to SpikeMLLM(QuaRot) ($T=15$); consistent results on InternVL2-8B are provided in Supplementary Table~\ref{tab:efficiency_intern}. FLOPs for FP16 are computed as multiply-accumulate (MAC) operations; for spike-based methods, weight-activation interactions reduce to accumulate-only (AC) operations owing to binary spikes. Full per-layer firing rate distributions for all methods on Qwen2VL-7B and InternVL2-8B are provided in Supplementary Figs.~\ref{fig:firing_rate_full_qwen} and~\ref{fig:firing_rate_full_intern}.

\begin{figure}[t]
  \centering
  \begin{subfigure}[b]{\linewidth}
    \centering
    \includegraphics[width=0.9\linewidth]{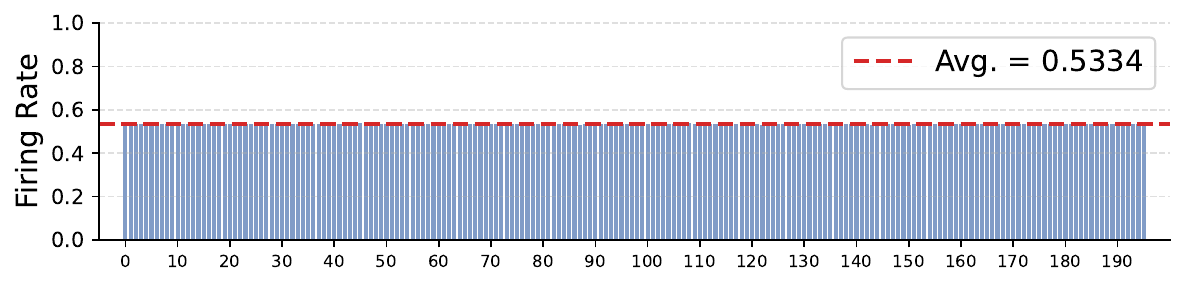}
    \caption{SpikeMLLM(QuaRot) ($T=15$)} 
    \label{fig:firing_rate_a}
  \end{subfigure}
  
  \vspace{0.5em} 

  \begin{subfigure}[b]{\linewidth}
    \centering
    \includegraphics[width=0.9\linewidth]{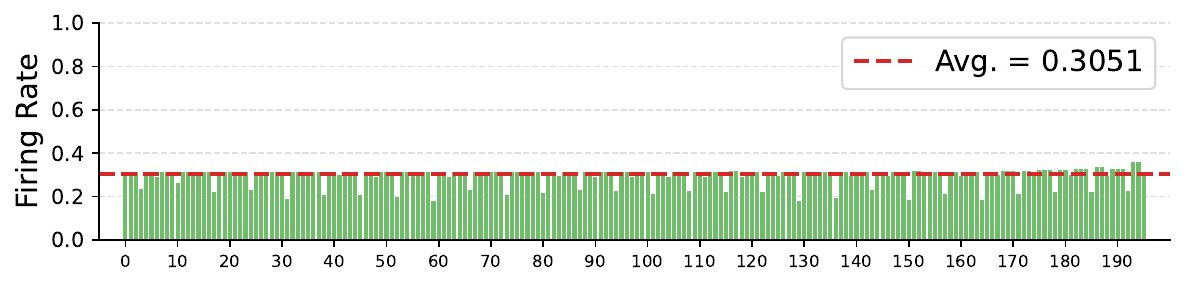}
    \caption{SpikeMLLM(QuaRot+MSTS+TC-LIF) ($T_v/T_t=3/4$)}
    \label{fig:firing_rate_b}
  \end{subfigure}

  \caption{Firing rate distribution across layers for SpikeMLLM(QuaRot) ($T=15$) and SpikeMLLM(QuaRot+MSTS+TC-LIF) ($T_v/T_t=3/4$) on Qwen2VL-7B.} 
  \label{fig:firing_rate}
\end{figure}

\begin{table}[t]
\centering
\small
\renewcommand{\arraystretch}{0.8}
\caption{Firing rate and FLOPs analysis of SpikeMLLM on Qwen2VL-7B. FLOPs are computed as INT FLOPs following \citet{xu2023qdetr,liu2020birealnet,xu2025neurormorphic}; see Supplementary Section~\ref{appendix:flops} for computation details.}
\label{tab:efficiency}
\begin{adjustbox}{width=1\linewidth} 
\begin{tabular}{lccc}
\toprule
Method & Time Step & Firing Rate & FLOPs (T) \\
\midrule
FP16 & N/A & N/A & 8.78 \\
SpikeMLLM(RTN) & 255 & 0.50 & 17.57 \\
SpikeMLLM(GPTQ) & 255 & 0.50 & 17.57 \\
SpikeMLLM(QuaRot) & 7 & 0.57 & 0.55 \\
SpikeMLLM(QuaRot+MSTS+TC-LIF) & 2/3 & 0.30 & 0.17 \\
W4A4 & N/A & N/A & 0.55 \\
SpikeMLLM(QuaRot) & 15 & 0.53 & 1.10 \\
SpikeMLLM(QuaRot+MSTS+TC-LIF) & 3/4 & 0.31 & 0.25 \\
\bottomrule
\end{tabular}
\end{adjustbox}
\end{table}

\subsection{Deployment-oriented Co-designed System Study}

The proposed accelerator is used to evaluate the deployment of the low-timestep SpikeMLLM configuration based on QuaRot+MSTS+TC\allowbreak-LIF. At the PE level, the current RTL prototype implements the dominant 3-bit temporal-magnitude datapath induced by TC-LIF, while modality-aware control at the system level supports the asymmetric timestep allocation policy of SpikeMLLM. We implement the RTL design for SpikeMLLM and use Synopsys DC on SMIC 28nm CMOS technology to estimate logic area and power. The on-chip SRAM buffer macros are generated using the ARM ARTISAN 28nm single-port SRAM compiler. Off-chip HBM behavior is modeled by simulating row activation and access patterns under different data layouts. Memory latency and IO power are estimated following ~\cite{MCBP}. We extract per-stage cycle counts from RTL simulation using Synopsys VCS, and use a custom cycle-level simulator to evaluate overall execution latency. Detailed post-synthesis hardware implementation metrics of the proposed accelerator are listed in Supplementary Section~\ref{RTL}. 

On the GPU side, we deploy the Qwen2VL-7B FP16 baseline on an NVIDIA A800 under the same batch-size-1 setting, and use it as the deployment reference. We report the measured inference throughput and board power under the evaluated configuration. 
As shown in Table~\ref{accel}, the proposed design shows 9.06$\times$ higher throughput (393.7 token/s vs.\ 43.5 token/s). In terms of system power, the proposed design operates at 7.13 W, compared with 184 W for the GPU reference, corresponding to 25.8$\times$ lower power. These gains arise from the combination of algorithmic workload reduction and hardware specialization.

These results indicate that SpikeMLLM does more than preserve multimodal accuracy under compressed timesteps. MSTS and TC-LIF reformulate the computation into a spike-driven pattern with reduced temporal depth, lower activity, and more regular low-bit execution. Such properties are favorable for accelerator implementation, enabling the simplifications introduced at the model level to be reflected in system-level efficiency. Hardware evaluation results support this observation and suggest that software–hardware co-design is an effective approach for improving deployment efficiency.

\begin{table}[t]
\centering
\caption{Deployment-oriented throughput and power comparison between the proposed SpikeMLLM accelerator for the low-timestep QuaRot+MSTS+TC-LIF setting and Qwen2VL-7B FP16 inference on NVIDIA A800.}
\Huge
\label{accel}
\begin{adjustbox}{width=1\linewidth} 
\begin{tabular}{lccc}
\toprule
System & Power [W] & Throughput [token/s] \\
\midrule
SpikeMLLM (Proposed accelerator) & 7.13 & 393.7 \\
Qwen2VL-7B-FP16 (NVIDIA A800) & 184 & 43.5 \\
Relative gain &  25.8× &  9.06×  \\
\bottomrule
\end{tabular}
\end{adjustbox}
\end{table}

\section{Conclusion}
In this paper, we propose SpikeMLLM, a spike-based framework for  MLLMs that consistently models existing ANN quantization methods within the spiking representation space. To address the modality-heterogeneous temporal scale requirements and high timestep unfolding overhead encountered when extending SNNs to  MLLMs, and in contrast to the uniform timestep configurations and linearly growing integer-to-spike unfolding mechanisms in existing methods, we further propose MSTS and TC-LIF, achieving modality-aware differentiated timestep allocation and timestep compression from $T=L-1$ to $T=\log_2(L)-1$, respectively. Experimental results demonstrate that SpikeMLLM maintains near-FP16 performance while significantly compressing timesteps across multiple representative MLLMs and multimodal benchmarks, with scalability further validated on Qwen2VL-72B. 
Beyond preserving multimodal accuracy under aggressive timestep compression, the proposed formulation reshapes inference into a low-timestep, multiplier-free-friendly computation pattern. Our deployment-oriented RTL study suggests that such algorithmic simplifications can be translated into $9.06\times$ throughput and $25.8\times$ power benefits when paired with dedicated hardware.

\bibliographystyle{ACM-Reference-Format}
\bibliography{main}

\clearpage
\newcounter{savepage}
\setcounter{savepage}{\value{page}}

\title{Supplementary Materials:\\ SpikeMLLM: Spike-based Multimodal Large Language Models via Modality-Specific Temporal Scales and Temporal Compression}

\maketitlesupp

\setcounter{page}{\value{savepage}}

\setcounter{section}{5}
\renewcommand{\thesection}{\arabic{section}}
\setcounter{figure}{7}  
\setcounter{table}{6}    

\section{Related Work}
\subsection{\textbf{From Spiking Neural Networks to Spike-based Language and Multimodal Models}}
In recent years, SNNs~\cite{maass1997networks,fang2023spikingjelly,roy2019towards} have expanded from early visual recognition~\cite{cao2015spiking,yao2025scaling,lei2025spike2former} to sequence modeling and language tasks, with researchers exploring direct training~\cite{wu2018spatio,eshraghian2023training,fang2021deep,zheng2021going,neftci2019surrogate}, ANN-to-SNN conversion~\cite{hu2023spiking,rathi2020enabling,Hu2023FastSNNFS,wu2021progressive,wang2023masked}, and integer-to-spike unfolding \cite{yao2024spikev2, lei2025spike2former, yao2025scaling, xu2025neurormorphic,luo2024integer}. Early works attempted to emit integer spikes for richer representations~\cite{Hu2023FastSNNFS,xing2025spikellm}, but typically violated the spike-driven sparse computation property. The integer-to-spike unfolding paradigm instead maps membrane potentials to discrete spike counts, unfolded into multi-timestep binary sequences that preserve event-driven sparse addition while enhancing representational capacity~\cite{yao2024spikev2,xu2025neurormorphic}. Its structural consistency with quantized ANN activations makes it a key direction for scalable spike-based modeling. However, this paradigm requires $T = L-1$ timesteps for $L$ quantization levels, and the temporal overhead grows linearly with quantization precision, becoming more pronounced in multimodal scenarios with larger token counts.

In the direction of language modeling, SpikeGPT~\cite{zhu2024spikegpt}, SpikingBERT~\cite{bal2024spikingbert}, and SpikeLM~\cite{xing2024spikelm} initially validated the feasibility of spike-based language modeling, but remained confined to small-scale supervised tasks. SpikeLLM~\cite{xing2025spikellm} and NSLLM~\cite{xu2025neurormorphic} further scaled SNNs to billion-parameter LLMs, achieving competitive performance on general language tasks. In the direction of multimodal perception, SNNs have been explored for audio-visual classification~\cite{liu2022event, guo2023transformer, liu2025misnet}, emotion recognition~\cite{tan2020fusionsense, guo2024smile, chen2025enhancing}, event-frame fusion~\cite{zhou2024enhancing}, and hybrid ANN-SNN architectures~\cite{sun2024reliable, tumpa2024snn}, primarily targeting cross-modal feature fusion, temporal alignment, or modality imbalance in small-scale perceptual tasks.

However, these works share two common limitations: existing spike-based LLM works remain confined to unimodal text scenarios, while existing multimodal SNN works address explicit multi-branch fusion rather than the modality-heterogeneous representational efficiency problem within MLLMs. In summary, existing works either remain confined to unimodal text scenarios or stay within small-scale multimodal perceptual tasks, leaving the introduction of SNNs into MLLMs unexplored — this is the starting point of our work.

\subsection{\textbf{ANN Quantization and Its Extension to MLLMs}}
Quantization reduces the computational and memory overhead of large models by compressing weights and activations into low-bit representations, and has become an important approach for efficient inference. Depending on whether retraining is required, quantization methods are broadly categorized into Post-Training Quantization (PTQ) and Quantization-Aware Training (QAT)~\cite{liu2023llmqat,jacob2018quantization}. PTQ requires no backpropagation and operates directly on a small calibration dataset, making it particularly practical for large-scale models; representative methods such as GPTQ~\cite{frantar2023gptq}, SmoothQuant~\cite{xiao2023smoothquant}, and QuaRot~\cite{ashkboos2024quarot} have achieved significant progress in LLM quantization. However, these methods are designed for unimodal language models and do not account for the heterogeneity between vision and language tokens in multimodal scenarios. To address this, MBQ~\cite{li2025mbq} identifies significant differences in quantization sensitivity between vision and language tokens and proposes a modality-aware calibration strategy. Nevertheless, existing quantization methods all operate within the ANN representation space. The key insight of this work is to consistently model the results of ANN quantization within the spiking representation space, enabling compatibility with existing quantization methods while introducing the energy efficiency advantages of the spike-driven paradigm.

\subsection{\textbf{Neuromorphic Hardware and Spike-driven Acceleration}}
Neuromorphic hardware is inspired by the structure and function of the biological brain, featuring compute-memory co-location and spike-driven event-based computation~\cite{roy2019towards,schuman2022opportunities}, which offers significant energy efficiency advantages over conventional von Neumann architectures. Existing neuromorphic platforms span both pure SNN architectures~\cite{merolla2014million,davies2018loihi} and hybrid ANN-SNN designs~\cite{ma2022neuromorphic,hoppner2021spinnaker} to improve flexibility. Among recent representative systems, the asynchronous sensing-computing SoC Speck demonstrates ultra-low static power consumption of approximately 0.42~mW, with 0.7--15~mW in typical neuromorphic vision tasks~\cite{yao2024spike}. Intel Loihi~2 significantly enhances programmability and representational flexibility through microcode-defined neuron state transitions and firing logic, as well as graded spike events supporting different polarities or amplitudes~\cite{intel2021loihi2}. In the direction of large model deployment, recent work has mapped a MatMul-free LLM with ternary weights and 16-bit activations onto a multi-chip Loihi~2 system, achieving up to 10$\times$ energy reduction and 4$\times$ throughput improvement over edge GPUs via asynchronous processing~\cite{zhu2024matmulfree}. From a hardware--algorithm co-design perspective, NSLLM develops a MatMul-free FPGA accelerator for a 1.5B-scale spiking LLM under a large timestep setting, achieving 19.8$\times$ energy-efficiency improvement and 2.2$\times$ throughput gain over a GPU baseline~\cite{xu2025neurormorphic}. In this work, we design a dedicated RTL accelerator for SpikeMLLM to validate its energy efficiency, providing a reference for algorithm-hardware co-design of next-generation neuromorphic chips.

\section{Theoretical Proofs}
\subsection{Proof of the Reference Equivalence under Standard Integer-to-Spike Unfolding}
\label{app:reference_equivalence}

\begin{proposition}
\textbf{(Reference Equivalence between Standard Integer-to-Spike Unfolding and $A$-bit Activation Quantization).}
For an $A$-bit activation quantizer, let the number of quantization levels be $L=2^A$, and let the quantized activation at layer $l$ be denoted by
\begin{equation}
\mathbf{S}^l \in \{0,1,\ldots,L-1\}.
\end{equation}
Under standard integer-to-spike unfolding, we set
\begin{equation}
T=L-1,
\end{equation}
and expand each integer activation $\mathbf{S}^l=k$ into a binary spike sequence of length $T$:
\begin{equation}
\hat{\mathbf{S}}^l[t] =
\begin{cases}
1, & 1 \le t \le k,\\
0, & k < t \le T,
\end{cases}
\qquad t=1,\ldots,T.
\end{equation}
Then standard unfolding is equivalent to the original $A$-bit activation quantization in both numerical representation and linear-layer computation, i.e.,
\begin{equation}
\sum_{t=1}^{T}\hat{\mathbf{S}}^l[t] = \mathbf{S}^l,
\end{equation}
and
\begin{equation}
\mathbf{X}^{l+1}=\mathbf{W}^{l+1}\mathbf{S}^l
=
\mathbf{W}^{l+1}\sum_{t=1}^{T}\hat{\mathbf{S}}^l[t]
=
\sum_{t=1}^{T}\mathbf{W}^{l+1}\hat{\mathbf{S}}^l[t].
\end{equation}
Therefore, the $T=L-1$ configuration serves as the spike-form reference corresponding to $A$-bit activation quantization.
\end{proposition}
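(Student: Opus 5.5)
The plan is to prove the two identities elementwise and then lift them to vectors and matrices by linearity. The statement is applied coordinatewise: the unfolding is defined separately for each neuron $i$ with integer activation $S^l_i=k_i\in\{0,\ldots,L-1\}$. So first I would fix one coordinate and verify $\sum_{t=1}^{T}\hat S^l_i[t]=k_i$. I would then obtain the vector identity by stacking the coordinates.

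For the scalar identity, the first thing to check is that the unfolding is well defined. The construction needs $k\le T$, otherwise the first $k$ positions would not fit in a sequence of length $T$. This follows from $k\le L-1=T$, and it is precisely why $T=L-1$ is the minimal admissible choice.

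Given this, split the sum at $t=k$. The terms with $1\le t\le k$ each contribute $1$, and the terms with $k<t\le T$ contribute $0$. Hence the sum equals $k$. The edge cases $k=0$ (empty first block) and $k=T$ (empty second block) are covered by the same count with the usual convention for empty sums. I would also note that each $\hat S^l_i[t]\in\{0,1\}$, so the unfolded sequence is genuinely binary.

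For the linear-layer identity, I would substitute the vector identity $\mathbf{S}^l=\sum_t\hat{\mathbf{S}}^l[t]$ into $\mathbf{X}^{l+1}=\mathbf{W}^{l+1}\mathbf{S}^l$. Distributivity of matrix--vector multiplication over finite sums then gives $\mathbf{W}^{l+1}\sum_t\hat{\mathbf{S}}^l[t]=\sum_t\mathbf{W}^{l+1}\hat{\mathbf{S}}^l[t]$, with exact equality because the arithmetic is exact over the reals or integers.

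If the quantizer carries a scale $\Delta$, as in the correspondence $Q(\mathbf{U}^l)\in\{0,\Delta,\ldots,(L-1)\Delta\}$ of the preliminaries, then $\Delta$ factors out of every term. The dequantized outputs therefore also coincide.

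Finally, since each $\mathbf{W}^{l+1}\hat{\mathbf{S}}^l[t]$ is a sum of the columns of $\mathbf{W}^{l+1}$ selected by the active spikes, the computation uses additions only. Together with exact equality of $\mathbf{X}^{l+1}$, this justifies calling $T=L-1$ the spike-form reference for $A$-bit quantization.

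I expect no real obstacle. The only point needing care is the well-definedness condition $k\le T$, which is exactly where the choice $T=L-1$ enters. Everything else is bookkeeping.
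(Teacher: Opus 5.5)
Your proposal is correct and follows essentially the same route as the paper: count the $k$ ones in the unary unfolding to recover $\mathbf{S}^l$, then use linearity of multiplication by $\mathbf{W}^{l+1}$ over the finite sum. Your extra remarks are harmless refinements that the paper leaves implicit: the well-definedness condition $k\le T$, the coordinatewise reading, and factoring out the scale $\Delta$.
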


\begin{proof}
For an $A$-bit activation quantizer, the discrete support contains $L=2^A$ levels, so the quantized activation can be represented as an integer
\begin{equation}
\mathbf{S}^l \in \{0,1,\ldots,L-1\}.
\end{equation}
Under standard integer-to-spike unfolding, the number of timesteps is chosen as
\begin{equation}
T=L-1,
\end{equation}
which is exactly sufficient to represent the maximum integer value $L-1$ using binary spikes unfolded along time.

Consider an arbitrary quantized activation value $\mathbf{S}^l=k$, where $k\in\{0,1,\ldots,L-1\}$. By construction, its unfolded binary spike sequence is defined as
\begin{equation}
\hat{\mathbf{S}}^l[t] =
\begin{cases}
1, & 1 \le t \le k,\\
0, & k < t \le T,
\end{cases}
\qquad t=1,\ldots,T.
\end{equation}
Hence, among the $T$ timesteps, exactly the first $k$ entries are equal to $1$, and the remaining $T-k$ entries are equal to $0$. Therefore,
\begin{equation}
\sum_{t=1}^{T}\hat{\mathbf{S}}^l[t]
=
\underbrace{1+\cdots+1}_{k\ \text{times}}
+
\underbrace{0+\cdots+0}_{T-k\ \text{times}}
=
k
=
\mathbf{S}^l.
\end{equation}
This proves that standard unfolding preserves the numerical value of the original quantized activation exactly.

Next, consider the linear transform at the next layer. Let $\mathbf{W}^{l+1}$ denote the weight matrix of layer $l+1$. Since matrix multiplication is linear, we have
\begin{equation}
\mathbf{X}^{l+1}
=
\mathbf{W}^{l+1}\mathbf{S}^l
=
\mathbf{W}^{l+1}\sum_{t=1}^{T}\hat{\mathbf{S}}^l[t]
=
\sum_{t=1}^{T}\mathbf{W}^{l+1}\hat{\mathbf{S}}^l[t].
\end{equation}
Thus, the output produced by standard integer-to-spike unfolding is exactly the same as that produced by the original quantized activation under the same linear layer.

Combining the two results above, standard integer-to-spike unfolding with $T=L-1$ is equivalent to the original $A$-bit activation quantization in both numerical representation and linear-layer computation. Therefore, it serves as the spike-form reference corresponding to $A$-bit activation quantization.
\end{proof}

In particular, for A4 activation quantization, we have $L=2^4=16$, and thus the corresponding standard unfolding uses $T=L-1=15$. Therefore, the $T=15$ configuration used in our framework serves as the spike-form reference corresponding to standard A4 activation quantization. The main goal of MSTS and TC-LIF is not to reproduce this reference, but to preserve its performance as much as possible under aggressive timestep compression.

\subsection{Proof of the Representational Equivalence under TC-LIF}
\label{app:tclif_equivalence}

\begin{proposition}
\textbf{(Representational Equivalence of Polar TC-LIF to the Effective Symmetric $A$-bit Discrete Set).}
For an $A$-bit discrete activation space, let the number of quantization levels be $L=2^A$.  
Under the polar TC-LIF formulation, after dropping the unreachable extreme level $-L/2$, the effective symmetric discrete support becomes
\begin{equation}
\mathbf{S}^l \in \left\{-\frac{L}{2}+1,\,-\frac{L}{2}+2,\ldots,\frac{L}{2}-1\right\}.
\end{equation}
Let the number of timesteps be
\begin{equation}
T=\log_2(L)-1=A-1.
\end{equation}
Then every quantized activation $\mathbf{S}^l$ in the above set can be represented exactly by a polar weighted spike sequence
\begin{equation}
\mathbf{S}^l=\sum_{t=1}^{T}2^{t-1}\hat{\mathbf{S}}^l[t],
\qquad \hat{\mathbf{S}}^l[t]\in\{-1,0,1\},
\end{equation}
where the spike polarity carries the sign and the temporal weights encode the binary magnitude. Moreover, for a linear layer with weight matrix $\mathbf{W}^{l+1}$,
\begin{equation}
\mathbf{X}^{l+1}=\mathbf{W}^{l+1}\mathbf{S}^l
=
\mathbf{W}^{l+1}\sum_{t=1}^{T}2^{t-1}\hat{\mathbf{S}}^l[t]
=
\sum_{t=1}^{T}2^{t-1}\mathbf{W}^{l+1}\hat{\mathbf{S}}^l[t].
\end{equation}
Therefore, polar TC-LIF is exactly equivalent to the effective symmetric $A$-bit discrete activation set with one dropped extreme level, while reducing the number of timesteps from $L-1$ to $\log_2(L)-1$.
\end{proposition}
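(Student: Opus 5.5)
The plan is to prove the statement in three steps: fix the structure of the effective support, then construct the polar weighted spike sequence explicitly, and finally deduce the linear-layer identity by linearity, as in the proof of the reference equivalence for standard unfolding.

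\textbf{Step 1 (magnitude bound).} Every element $\mathbf{S}^l=k$ of the effective support $\{-L/2+1,\ldots,L/2-1\}$ has a sign $\sigma=\mathrm{sgn}(k)\in\{-1,0,1\}$ and a magnitude $|k|\le L/2-1=2^{A-1}-1$. Dropping the level $-L/2$ is used exactly here. Without it, the magnitude $2^{A-1}$ would require an $A$-th binary digit. With it, every magnitude lies in $\{0,1,\ldots,2^{T}-1\}$ with $T=A-1=\log_2(L)-1$.

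\textbf{Step 2 (construction).} Write $|k|$ in base two using $T$ digits:
\begin{equation}
|k|=\sum_{t=1}^{T}2^{t-1}b_t,\qquad b_t\in\{0,1\}.
\end{equation}
This expansion exists and is unique because $0\le |k|\le 2^T-1$; that is the standard fact that $T$ bits encode exactly $\{0,\ldots,2^T-1\}$, by the geometric sum $\sum_{t=1}^T2^{t-1}=2^T-1$. Now set $\hat{\mathbf{S}}^l[t]=\sigma b_t\in\{-1,0,1\}$. Every nonzero spike in the train then carries the same polarity $\sigma$, matching the description that polarity carries the sign. Summing gives
\begin{equation}
\sum_{t=1}^{T}2^{t-1}\hat{\mathbf{S}}^l[t]=\sigma|k|=k .
\end{equation}
The case $k=0$ yields the all-zero train. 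The construction is applied elementwise to a tensor $\mathbf{S}^l$.

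\textbf{Step 3 (linear layer).} Substitute this representation into $\mathbf{X}^{l+1}=\mathbf{W}^{l+1}\mathbf{S}^l$ and use linearity of matrix multiplication to pull out the scalar weights $2^{t-1}$ and the finite sum. This reproduces the displayed chain of equalities, just as in the reference-equivalence proof.

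\textbf{Conclusion and tightness.} Together these steps give exact equivalence with $T=\log_2(L)-1$ timesteps, compared with $L-1$ for standard unfolding. To justify that nothing is lost, I would remark that the set of representable values $\{\sum_t 2^{t-1}s_t : s_t\in\{-1,0,1\},\ \text{common sign}\}$ is precisely the symmetric interval $[-(2^T-1),\,2^T-1]$, so the map is a bijection onto the effective support. One could also add that $T$ cannot be smaller, since $2^{T'}-1<L/2-1$ for $T'<T$; this is not required by the statement.

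\textbf{Expected obstacle.} The only delicate point is Step 1: checking that the boundary value $L/2-1$ fits in $T=A-1$ bits while $L/2$ does not. Once that inequality is stated explicitly, the rest is routine. A secondary subtlety is that the construction should use a single polarity per activation, so the sign handling matches the sign-magnitude datapath rather than an arbitrary signed-digit expansion.
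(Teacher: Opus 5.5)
Your proposal is correct and follows essentially the same route as the paper: bound the magnitude by $L/2-1=2^{A-1}-1$, expand $|\mathbf{S}^l|$ in $A-1$ binary digits $b_t$, set $\hat{\mathbf{S}}^l[t]=\pm b_t$ according to the sign of $\mathbf{S}^l$, and obtain the linear-layer identity by linearity. Your extra remarks on uniqueness, bijectivity onto the effective support, and minimality of $T$ go beyond the paper's proof and are not needed for the statement.
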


\begin{proof}
For an $A$-bit discrete activation space, the total number of quantization levels is
\begin{equation}
L=2^A.
\end{equation}
Under the polar TC-LIF formulation, the effective support after dropping the unreachable extreme level $-L/2$ is
\begin{equation}
\mathbf{S}^l \in \left\{-\frac{L}{2}+1,\,-\frac{L}{2}+2,\ldots,\frac{L}{2}-1\right\}.
\end{equation}
Hence, the maximum magnitude is
\begin{equation}
|\mathbf{S}^l|_{\max}=\frac{L}{2}-1=2^{A-1}-1.
\end{equation}
Since any integer in the range $\{0,1,\ldots,2^{A-1}-1\}$ admits an exact binary expansion using $A-1$ bits, the magnitude $|\mathbf{S}^l|$ can be written as
\begin{equation}
|\mathbf{S}^l|=\sum_{t=1}^{A-1}2^{t-1}b_t,
\qquad b_t\in\{0,1\}.
\end{equation}
Let the sign of $\mathbf{S}^l$ be carried by spike polarity. Then we define
\begin{equation}
\hat{\mathbf{S}}^l[t]=
\begin{cases}
b_t, & \mathbf{S}^l \ge 0,\\
-b_t, & \mathbf{S}^l < 0,
\end{cases}
\qquad t=1,\ldots,A-1.
\end{equation}
It follows directly that
\begin{equation}
\mathbf{S}^l=\sum_{t=1}^{A-1}2^{t-1}\hat{\mathbf{S}}^l[t].
\end{equation}
Therefore, every value in the effective symmetric discrete support can be represented exactly using
\begin{equation}
T=A-1=\log_2(L)-1
\end{equation}
timesteps.

Next, consider the linear transform at the next layer. By linearity of matrix multiplication,
\begin{equation}
\mathbf{X}^{l+1}
=
\mathbf{W}^{l+1}\mathbf{S}^l
=
\mathbf{W}^{l+1}\sum_{t=1}^{T}2^{t-1}\hat{\mathbf{S}}^l[t]
=
\sum_{t=1}^{T}2^{t-1}\mathbf{W}^{l+1}\hat{\mathbf{S}}^l[t].
\end{equation}
Thus, the linear-layer output produced by polar TC-LIF is exactly the same as that produced by the corresponding discrete activation value under the same linear layer.

Combining the two results above, polar TC-LIF is exactly equivalent to the effective symmetric $A$-bit discrete activation set with one dropped extreme level in both numerical representation and linear-layer computation, while requiring only $T=\log_2(L)-1$ timesteps.
\end{proof}

\begin{proposition}
\textbf{(Representational Equivalence of Non-polar TC-LIF to the Non-negative $A$-bit Discrete Set).}
For an $A$-bit discrete activation space, let the number of quantization levels be $L=2^A$, and let the quantized activation at layer $l$ be denoted by
\begin{equation}
\mathbf{S}^l \in \{0,1,\ldots,L-1\}.
\end{equation}
Under the non-polar TC-LIF formulation, we set
\begin{equation}
T=\log_2(L)=A.
\end{equation}
Then every quantized activation $\mathbf{S}^l$ can be represented exactly by a weighted binary spike sequence
\begin{equation}
\mathbf{S}^l=\sum_{t=1}^{T}2^{t-1}\hat{\mathbf{S}}^l[t],
\qquad \hat{\mathbf{S}}^l[t]\in\{0,1\}.
\end{equation}
Moreover, for a linear layer with weight matrix $\mathbf{W}^{l+1}$,
\begin{equation}
\mathbf{X}^{l+1}=\mathbf{W}^{l+1}\mathbf{S}^l
=
\mathbf{W}^{l+1}\sum_{t=1}^{T}2^{t-1}\hat{\mathbf{S}}^l[t]
=
\sum_{t=1}^{T}2^{t-1}\mathbf{W}^{l+1}\hat{\mathbf{S}}^l[t].
\end{equation}
Therefore, non-polar TC-LIF is exactly equivalent to the original non-negative $A$-bit discrete activation set while reducing the number of timesteps from $L-1$ to $\log_2(L)$.
\end{proposition}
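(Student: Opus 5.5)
The argument follows the polar case in the previous proposition, with the level-dropping step removed. With $L=2^A$, the support is $\{0,1,\ldots,2^A-1\}$. Its maximum element is $L-1=2^A-1$, and this is exactly the largest integer representable with $A$ binary digits. The plan therefore has three steps: establish an exact binary expansion with $A=\log_2(L)$ digits, identify the digits with the non-polar spikes, and use linearity to carry the identity through the weight matrix.

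\textbf{Step 1: binary expansion.} Fix an arbitrary entry $\mathbf{S}^l=k\in\{0,\ldots,2^A-1\}$. Writing $k$ in base two gives unique bits $b_1,\ldots,b_A\in\{0,1\}$ with $k=\sum_{t=1}^{A}2^{t-1}b_t$. For a self-contained argument, the bits can be obtained greedily: set $b_t=\lfloor k/2^{t-1}\rfloor \bmod 2$. The equality can then be checked by induction on $A$, writing $k=2k'+b_1$ with $k'\le 2^{A-1}-1$.

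The only point that needs explicit justification is the bound. Because $\sum_{t=1}^{A}2^{t-1}=2^A-1$, every sum of this form lies in $[0,L-1]$, and no higher timestep is ever needed. Conversely, with $A-1$ timesteps the largest representable value is $2^{A-1}-1<L-1$. This shows $T=A$ is both sufficient and necessary. It also explains why the non-polar variant needs one more timestep than the polar variant: there is no sign bit to trade for a dropped level.

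\textbf{Step 2: spike assignment.} Define $\hat{\mathbf{S}}^l[t]=b_t\in\{0,1\}$ for $t=1,\ldots,T$, applied elementwise. This is a valid non-polar spike train, and Step 1 gives $\mathbf{S}^l=\sum_{t=1}^{T}2^{t-1}\hat{\mathbf{S}}^l[t]$ exactly.

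\textbf{Step 3: linear-layer equivalence.} Apply $\mathbf{W}^{l+1}$ to both sides. Distributivity of matrix multiplication over sums and scalars gives $\mathbf{X}^{l+1}=\mathbf{W}^{l+1}\mathbf{S}^l=\sum_{t=1}^{T}2^{t-1}\mathbf{W}^{l+1}\hat{\mathbf{S}}^l[t]$. Each term $\mathbf{W}^{l+1}\hat{\mathbf{S}}^l[t]$ involves only binary inputs, so it is a sparse accumulation. The factor $2^{t-1}$ is a shift, which preserves the spike-driven property.

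The timestep reduction from $L-1$ to $\log_2 L$ follows directly from the choice $T=A$.
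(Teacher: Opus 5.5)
Your proposal is correct and follows essentially the same route as the paper: an exact $A$-bit binary expansion of every integer in $\{0,\ldots,2^A-1\}$, identification of the bits with non-polar spikes $\hat{\mathbf{S}}^l[t]\in\{0,1\}$, and linearity of $\mathbf{W}^{l+1}$ to carry the identity through the linear layer. Your explicit digit formula with the inductive check, and your remark that $A-1$ timesteps with weights $2^{t-1}$ cannot reach $L-1$, go slightly beyond the paper's proof but do not change the argument.
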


\begin{proof}
For an $A$-bit discrete activation space, the number of quantization levels is
\begin{equation}
L=2^A,
\end{equation}
and the quantized activation satisfies
\begin{equation}
\mathbf{S}^l \in \{0,1,\ldots,L-1\}.
\end{equation}
Since every integer in $\{0,1,\ldots,2^A-1\}$ has an exact binary expansion using $A$ bits, any quantized activation $\mathbf{S}^l$ can be written as
\begin{equation}
\mathbf{S}^l=\sum_{t=1}^{A}2^{t-1}\hat{\mathbf{S}}^l[t],
\qquad \hat{\mathbf{S}}^l[t]\in\{0,1\}.
\end{equation}
Therefore, every value in the non-negative $A$-bit discrete support can be represented exactly using
\begin{equation}
T=A=\log_2(L)
\end{equation}
timesteps.

Next, by linearity of matrix multiplication,
\begin{equation}
\mathbf{X}^{l+1}
=
\mathbf{W}^{l+1}\mathbf{S}^l
=
\mathbf{W}^{l+1}\sum_{t=1}^{T}2^{t-1}\hat{\mathbf{S}}^l[t]
=
\sum_{t=1}^{T}2^{t-1}\mathbf{W}^{l+1}\hat{\mathbf{S}}^l[t].
\end{equation}
Thus, the linear-layer output produced by non-polar TC-LIF is exactly the same as that produced by the original non-negative discrete activation under the same linear layer.

Combining the two results above, non-polar TC-LIF is exactly equivalent to the original non-negative $A$-bit discrete activation set in both numerical representation and linear-layer computation, while requiring only $T=\log_2(L)$ timesteps.
\end{proof}

In particular, for A4 discrete activations, polar TC-LIF requires only $T=3$ timesteps to exactly represent the effective symmetric set $\{-7,\ldots,7\}$, while non-polar TC-LIF requires $T=4$ timesteps to exactly represent the full non-negative set $\{0,\ldots,15\}$. This explains the logarithmic timestep complexity of TC-LIF compared with the linear complexity $T=L-1$ of standard unfolding.

\section{Additional Method Details}
\subsection{Integer-to-Spike and TC-LIF Algorithms}
Algorithm~\ref{alg:its} presents the complete standard integer-to-spike unfolding, where each timestep contributes a unit spike, requiring $T = L - 1$ timesteps for $A$-bit quantization. Algorithm~\ref{alg:tclif} provides a unified TC-LIF implementation: in polar mode, the quantization range is $[-L/2+1, L/2-1]$ with sign carried by spike polarity, requiring $T = A - 1$ timesteps; in non-polar mode, the quantization range is $[0, L-1]$ with all-positive activations, requiring $T = A$ timesteps. Both modes perform temporally weighted binary spike accumulation, achieving a compression ratio of $\rho = \frac{L-1}{T}$ over standard unfolding.

\subsection{Non-polar TC-LIF}
\label{app:nonpolar}
In the main text, TC-LIF adopts the polar mode, encoding sign information into spike polarity via the symmetric quantization range $[-L_m/2+1, L_m/2-1]$. Here we provide a complete description of the non-polar variant. For modality $m$, the quantization scale is defined as $s_m = \frac{\max \mathbf{U}^{l,m} - \min \mathbf{U}^{l,m}}{L_m - 1}$. The spiking process of TC-LIF quantizes the membrane potential into the asymmetric range:
\begin{equation}
\mathbf{S}^{l,m} = Q\!\left(\mathbf{U}^{l,m},\, 0,\, L_m - 1\right) = \sum_{t_m=1}^{T_m} 2^{t_m-1} \cdot \hat{\mathbf{S}}^{l,m}[t_m]
\end{equation}
Since the maximum quantized level $L_m - 1$ can be represented using $\log_2(L_m)$ bits, the value is decomposed into bit-wise components, which are encoded via temporal weighting, leading to $T_m = \log_2(L_m)$ timesteps. The temporal weighted integration follows the same form as Eq.~(13):
\begin{equation}
\mathbf{X}^{l+1,m} = \sum_{t_m=1}^{T_m} 2^{t_m-1} \cdot \left(\mathbf{W}^{l+1} \hat{\mathbf{S}}^{l,m}[t_m]\right)
\end{equation}
The non-polar variant offers a simpler implementation, but carries less information per timestep than the polar variant, requiring one additional timestep. Both modes perform temporally weighted binary spike accumulation, preserving the multiplication-free sparse addition property, and provide flexible choices for different application scenarios. Tables~\ref{tab:nonpolar_qwen} and~\ref{tab:nonpolar_intern} compare the polar and non-polar variants of TC-LIF under the same timestep budget. While both variants follow the same temporally weighted spike-accumulation scheme, the polar variant consistently outperforms the non-polar one across all benchmarks. This result suggests that representing sign information via spike polarity is beneficial for preserving model performance under aggressive timestep compression.

\begin{algorithm}[t]
\caption{Standard Integer-to-Spike}
\label{alg:its}
\begin{algorithmic}[1]
\Require Membrane potential $\mathbf{U} \in \mathbb{R}^N$, bit-width $A$, $L=2^{A}$, weight matrix $\mathbf{W}$
\Ensure Output activation $\mathbf{X}_{out}$, firing rate $\mathcal{R}$
\State $T \leftarrow L - 1$ \Comment{number of timesteps}
\State $s \leftarrow \max|\mathbf{U}| / (L-1)$ \Comment{quantization scale}
\State $\mathbf{S} \leftarrow \mathrm{clip}(\lfloor\mathbf{U}/s\rceil,\, 0,\, L-1)$ \Comment{integer spike count}
\State $\mathbf{Y}_{acc} \leftarrow \mathbf{0}$, \quad $\mathbf{R} \leftarrow \mathbf{S}$, \quad $n_{fire} \leftarrow 0$
\For{$t = 1$ \textbf{to} $T$}
    \State $\hat{\mathbf{S}}[t] \leftarrow \mathbf{1}[\mathbf{R} \geq 1]$ \Comment{fire if residual remains, $\hat{\mathbf{S}}[t]\in\{0,1\}$}
    \State $\mathbf{R} \leftarrow \mathbf{R} - \hat{\mathbf{S}}[t]$ \Comment{update residual}
    \For{each $i$ with $\hat{S}_i[t] \neq 0$}
        \State $\mathbf{Y}_{acc} \leftarrow \mathbf{Y}_{acc} + \mathbf{W}_{:,i}$ \Comment{spike fires: accumulate $\mathbf{W}_{:,i}$}
    \EndFor
    \State $n_{fire} \leftarrow n_{fire} + \|\hat{\mathbf{S}}[t]\|_0$
\EndFor
\State $\mathbf{X}_{out} \leftarrow s\,\mathbf{Y}_{acc}$
\State $\mathcal{R} \leftarrow n_{fire} / (T \cdot N)$ \Comment{average firing rate}
\State \Return $\mathbf{X}_{out}$, $\mathcal{R}$
\end{algorithmic}
\end{algorithm}

\begin{algorithm}[t]
\caption{TC-LIF Inference (Non-polar and Polar)}
\label{alg:tclif}
\begin{algorithmic}[1]
\Require Membrane potential $\mathbf{U}\in\mathbb{R}^{N}$, bit-width $A$, $L=2^{A}$, weight matrix $\mathbf{W}\in\mathbb{R}^{M\times N}$, polarity flag \textsc{polar}
\Ensure Output activation $\mathbf{X}_{out}$, firing rate $\mathcal{R}$, compression ratio $\rho$
\If{\textsc{polar}}
    \State $T \leftarrow A-1$ \Comment{$T = \log_2(L)-1$}
    \State $s \leftarrow \max|\mathbf{U}| / (L/2-1)$
    \State $\mathbf{S}_{int} \leftarrow \mathrm{clip}(\lfloor\mathbf{U}/s\rceil,\,-L/2+1,\,L/2-1)$
    \State $\boldsymbol{\sigma} \leftarrow \mathrm{sign}(\mathbf{S}_{int})$, \quad $\mathbf{V} \leftarrow |\mathbf{S}_{int}|$ \Comment{$\hat{\mathbf{S}}[t]\in\{0,1\}$, sign carried by $\boldsymbol{\sigma}$}
\Else
    \State $T \leftarrow A$ \Comment{$T = \log_2(L)$}
    \State $s \leftarrow \max|\mathbf{U}| / (L-1)$
    \State $\mathbf{S}_{int} \leftarrow \mathrm{clip}(\lfloor\mathbf{U}/s\rceil,\,0,\,L-1)$
    \State $\boldsymbol{\sigma} \leftarrow \mathbf{1}$, \quad $\mathbf{V} \leftarrow \mathbf{S}_{int}$
\EndIf
\State $\mathbf{Y}_{acc} \leftarrow \mathbf{0}\in\mathbb{R}^{M}$, \quad $\mathbf{R} \leftarrow \mathbf{V}$, \quad $n_{fire} \leftarrow 0$
\For{$t = T$ \textbf{downto} $1$}
    \State $w_t \leftarrow 2^{t-1}$
    \State $\hat{\mathbf{S}}[t] \leftarrow \mathbf{1}[\mathbf{R} \geq w_t]$
    \State $\mathbf{R} \leftarrow \mathbf{R} - w_t\,\hat{\mathbf{S}}[t]$
    \For{each $i$ with $\hat{S}_i[t] \neq 0$}
        \State \textbf{if} $\sigma_i < 0$: $\mathbf{Y}_{acc} \leftarrow \mathbf{Y}_{acc} - w_t\,\mathbf{W}_{:,i}$ \Comment{inhibitory spike}
        \State \textbf{else}: \phantom{$\sigma_i < 0$: }$\mathbf{Y}_{acc} \leftarrow \mathbf{Y}_{acc} + w_t\,\mathbf{W}_{:,i}$ \Comment{excitatory spike}
    \EndFor
    \State $n_{fire} \leftarrow n_{fire} + \|\hat{\mathbf{S}}[t]\|_0$
\EndFor
\State $\mathbf{X}_{out} \leftarrow s\,\mathbf{Y}_{acc}$ \Comment{sign absorbed in loop}
\State $\mathcal{R} \leftarrow n_{fire} / (T \cdot N)$
\State $\rho \leftarrow (L-1) / T$ \Comment{compression ratio over standard unfolding}
\State \Return $\mathbf{X}_{out},\,\mathcal{R},\,\rho$
\end{algorithmic}
\end{algorithm}

\begin{table}[t]
\centering
\caption{Comparison of polar and non-polar TC-LIF variants for SpikeMLLM(QuaRot+MSTS+TC-LIF) at $T_v/T_t=3/4$ on Qwen2VL-7B.}
\label{tab:nonpolar_qwen}
\begin{adjustbox}{width=\linewidth} 
\begin{tabular}{lccccc}
\toprule
TC-LIF & OCRBench & MME & TextVQA & DocVQA & ScienceQA \\
\midrule
FP16 & 835 & 2285 & 84.11 & 93.95 & 85.67 \\
Non-polar & 768 & 2087 & 81.42 & 89.64 & 82.05 \\
Polar & \textbf{823} & \textbf{2236} & \textbf{83.46} & \textbf{93.31} & \textbf{85.57} \\
\bottomrule
\end{tabular}
\end{adjustbox}
\end{table}

\begin{table}[t]
\centering
\caption{Comparison of polar and non-polar TC-LIF variants for SpikeMLLM(QuaRot+MSTS+TC-LIF) at $T_v/T_t=3/4$ on InternVL2-8B.}
\label{tab:nonpolar_intern}
\begin{adjustbox}{width=\linewidth} 
\begin{tabular}{lccccc}
\toprule
TC-LIF & OCRBench & MME & TextVQA & DocVQA & ScienceQA \\
\midrule
FP16 & 795 & 2211 & 77.71 & 91.05 & 97.07 \\
Non-polar & 751 & 2082 & 75.00 & 85.78 & 95.88 \\
Polar & \textbf{791} & \textbf{2198} & \textbf{76.98} & \textbf{89.85} & \textbf{96.83} \\
\bottomrule
\end{tabular}
\end{adjustbox}
\end{table}

\begin{table}[t]
\centering
\caption{Firing rate and FLOPs analysis of SpikeMLLM on InternVL2-8B. }
\label{tab:efficiency_intern}
\begin{adjustbox}{width=\linewidth} 
\begin{tabular}{lccc}
\toprule
Method & Time Step & Firing Rate & FLOPs(T) \\
\midrule
FP16 & N/A & N/A & 9.28 \\
SpikeMLLM(RTN) & 255 & 0.50 & 18.57 \\
SpikeMLLM(GPTQ) & 255 & 0.50 & 18.57 \\
SpikeMLLM(QuaRot) & 7 & 0.57 & 0.58 \\
SpikeMLLM(QuaRot+MSTS+TC-LIF) & 2/3 & 0.30 & 0.17 \\
W4A4 & N/A & N/A & 0.58 \\
SpikeMLLM(QuaRot) & 15 & 0.53 & 1.16 \\
SpikeMLLM(QuaRot+MSTS+TC-LIF) & 3/4 & 0.30 & 0.27 \\
\bottomrule
\end{tabular}
\end{adjustbox}
\end{table}

\section{Experimental and Computational Details}
\subsection{Sensitivity Analysis on Weight Bit-width}

We analyze the sensitivity of SpikeMLLM(QuaRot+MSTS+TC-LIF) to weight bit-width, including LLM weight bit-width (W4/W6/W8) and ViT weight bit-width (W4/W6/W8), with results shown in Tables~\ref{tab:llm_bitwidth} and~\ref{tab:vit_bitwidth}, respectively.

For LLM weights, performance remains stable across W4 to W8 on both Qwen2VL-7B and InternVL2-8B, indicating that SpikeMLLM is insensitive to LLM weight bit-width. For ViT weights, W6 achieves the best overall performance, while W4 shows a slight degradation and W8 is comparable to W6, with all configurations remaining within a narrow performance range.

These results demonstrate that SpikeMLLM maintains stable performance across different weight bit-width configurations, validating the robustness of the framework to weight quantization.

\subsection{FLOPs Computation}
\label{appendix:flops}
We refer to the $q$-bit operation FLOPs calculation method from Q-DETR~\cite{xu2023qdetr}. The FLOPs for 2-bit operations are $\frac{1}{32}$ of 32-bit FLOPs, for 3-bit operations $\frac{1}{16}$, and for 4-bit operations $\frac{1}{8}$~\cite{xu2023qdetr, liu2020birealnet}, since current hardware can parallelize bitwise operations via XNOR and popcount. Higher-bit operations can be decomposed into multiple lower-bit operations, and we compute $q$-bit FLOPs following this principle for all methods in this paper. For spiking neural networks, non-polar spikes $\hat{\mathbf{S}} \in \{0, 1\}$ correspond to 1-bit operations; polar spikes $\hat{\mathbf{S}} \in \{0, 1\}$ or $\hat{\mathbf{S}} \in \{-1, 0\}$ correspond to 2-bit operations. The FLOPs for spike-based methods are thus computed as:
\begin{equation}
    \text{FLOPs}_\text{spike} = \text{FLOPs}_{q\text{-bit}} \times T \times R
\end{equation}
where $T$ denotes the number of timesteps and $R$ denotes the average firing rate~\cite{xu2025neurormorphic}.

\subsection{Qualitative Examples}
Figure~\ref{fig:qualitative} presents qualitative examples comparing SpikeMLLM (QuaRot+MSTS+TC-LIF, $T_v/T_t=3/4$) with the FP16 baseline on representative multimodal inputs.

\section{Hardware Implementation Details}

\subsection{Hardware Co-design Accelerator Architecture}
\label{hardwaredetail}

\begin{table}[t]
\centering
\caption{Hardware Implementation Metrics}
\label{ourhardware}
\begin{adjustbox}{width=1\linewidth} 
\begin{tabular}{lc}
\toprule
Metric & Value \\
\midrule
Technology [nm] & 28 \\
Voltage [V] & 0.8 \\
Frequency [MHz] & 333 \\
Critical Path Delay [ns] & 2.04 \\
Power [mW] & 484 \\
Die Area [mm$^2$] & 76.27 \\
On-chip Memory [MB] & 44 \\
Peak Throughput [TOPS] & 5.46 \\
Peak Area Efficiency [TOPS/mm$^2$] & 0.072 \\
Peak Energy Efficiency [TOPS/W] & 11.28 ($S=50\%$) \\
\bottomrule
\end{tabular}
\end{adjustbox}
\end{table}

Supplementary Fig.~\ref{fig:toparchw16} illustrates the overall organization of the proposed architecture, which consists of four major components: Memory Banks, Bank Select Unit, Control Unit, and Spiking Mat-mul Unit. Rather than being a generic matrix-multiplication engine, the architecture is specifically co-optimized for the spike-based computation pattern, multiplier-free execution path, and high-reuse dataflow of SpikeMLLM. 

The Memory Banks store input activations, weights, and intermediate/output results, and provide parallel data supply through a banked organization. The Bank Select Unit is placed between the memory subsystem and the compute fabric, where it selects the target banks according to the current execution phase and maps the fetched data to the corresponding compute channels. The Control Unit generates the control signals for memory access, bank selection, and array execution, thereby coordinating data loading, computation progression, and result write-back. The Spiking Mat-mul Unit serves as the core compute engine and is organized as a regular PE array for executing spike-driven matrix multiplication.

From a dataflow perspective, the architecture adopts a broadcast-based operand reuse scheme. In each compute phase, the on-chip memory subsystem only needs to supply 2T input operands to sustain the array: one group of T operands is broadcast along one array dimension, while the other group of T operands is broadcast along the orthogonal dimension. This allows T×T PEs to update partial results in parallel. Compared to feeding each PE independently, this design can reduce the required data-delivery bandwidth and improves the reuse of spatial operands within the array. In the implementation of this work, the Spiking Mat-mul Unit is organized as a 16×16 regular array, corresponding to 256 parallel PEs as shown in Supplementary Fig.~\ref{fig:toparchw16}.

\subsection{RTL Synthesis Details}
\label{RTL}
The proposed accelerator was synthesized using the SMIC 28 nm CMOS technology library at 0.8 V and 25~$^\circ$C, targeting a clock frequency of 333 MHz, corresponding to a 3 ns clock period. Timing analysis was conducted across multiple process corners, including SS, TT, and FF, to verify timing closure. For power evaluation, post-synthesis switching activity was captured in VCD format and back-annotated into PrimeTime for power estimation.All hardware results are reported under this synthesis configuration.

As summarized in Supplementary Table~\ref{ourhardware}, the synthesized design occupies 76.27 mm$^2$ and integrates 44 MB of on-chip memory. The post-synthesis critical path delay is 2.04 ns, which satisfies the target 3 ns timing constraint with sufficient timing margin. Under this implementation, the accelerator achieves a peak throughput of 5.46 TOPS. Here, the reported peak throughput refers to the theoretical maximum compute capability obtained under the assumption that all processing elements are fully utilized with ideal data supply, without accounting for performance loss caused by memory access stalls, pipeline bubbles, or control overhead. Based on this peak-performance assumption, the corresponding peak area efficiency and peak energy efficiency are 0.072 TOPS/mm$^2$ and 11.28 TOPS/W at $S=50\%$, respectively. 

\clearpage

\begin{table*}[!h]
\centering
\caption{Ablation on LLM weight bit-width for SpikeMLLM(QuaRot+MSTS+TC-LIF) at $T_v/T_t=3/4$ (ViT W6 fixed).}
\label{tab:llm_bitwidth}
\begin{adjustbox}{width=1\linewidth} 
\begin{tabular}{llccccccccc}
\toprule
Model & Method & ViT W & LLM W & Spiking & Time Step & OCRBench & MME & TextVQA & DocVQA & ScienceQA \\
\midrule
\multirow{4}{*}{Qwen2VL-7B}
& FP16 & -- & -- & $\times$ & N/A & 835 & 2285 & 84.11 & 93.95 & 85.67 \\
& SpikeMLLM(QuaRot+MSTS+TC-LIF) & 6 & 4 & $\checkmark$ & 3/4 & 823 & 2236 & 83.46 & 93.31 & 85.57 \\
& SpikeMLLM(QuaRot+MSTS+TC-LIF) & 6 & 6 & $\checkmark$ & 3/4 & 825 & 2258 & 83.76 & 93.54 & 85.18 \\
& SpikeMLLM(QuaRot+MSTS+TC-LIF) & 6 & 8 & $\checkmark$ & 3/4 & 815 & 2247 & 83.61 & 93.21 & 85.72 \\
\midrule
\multirow{4}{*}{InternVL2-8B}
& FP16 & -- & -- & $\times$ & N/A & 795 & 2211 & 77.71 & 91.05 & 97.07 \\
& SpikeMLLM(QuaRot+MSTS+TC-LIF) & 6 & 4 & $\checkmark$ & 3/4 & 791 & 2198 & 76.98 & 89.85 & 96.83 \\
& SpikeMLLM(QuaRot+MSTS+TC-LIF) & 6 & 6 & $\checkmark$ & 3/4 & 786 & 2210 & 76.78 & 90.04 & 96.58 \\
& SpikeMLLM(QuaRot+MSTS+TC-LIF) & 6 & 8 & $\checkmark$ & 3/4 & 798 & 2226 & 76.79 & 90.40 & 96.83 \\
\bottomrule
\end{tabular}
\end{adjustbox}

\vspace{24pt}

\centering
\caption{Ablation on ViT weight bit-width for SpikeMLLM(QuaRot+MSTS+TC-LIF) at $T_v/T_t=3/4$ (LLM W4 fixed).}
\label{tab:vit_bitwidth}
\begin{adjustbox}{width=1\linewidth} 
\begin{tabular}{llccccccccc}
\toprule
Model & Method & ViT W & LLM W & Spiking & Time Step & OCRBench & MME & TextVQA & DocVQA & ScienceQA \\
\midrule
\multirow{4}{*}{Qwen2VL-7B}
& FP16 & -- & -- & $\times$ & N/A & 835 & 2285 & 84.11 & 93.95 & 85.67 \\
& SpikeMLLM(QuaRot+MSTS+TC-LIF) & 4 & 4 & $\checkmark$ & 3/4 & 819 & 2188 & 83.27 & 93.24 & 85.23 \\
& SpikeMLLM(QuaRot+MSTS+TC-LIF) & 6 & 4 & $\checkmark$ & 3/4 & 823 & 2236 & 83.46 & 93.31 & 85.57 \\
& SpikeMLLM(QuaRot+MSTS+TC-LIF) & 8 & 4 & $\checkmark$ & 3/4 & 816 & 2257 & 83.52 & 93.18 & 85.52 \\
\midrule
\multirow{4}{*}{InternVL2-8B}
& FP16 & -- & -- & $\times$ & N/A & 795 & 2211 & 77.71 & 91.05 & 97.07 \\
& SpikeMLLM(QuaRot+MSTS+TC-LIF) & 4 & 4 & $\checkmark$ & 3/4 & 791 & 2198 & 76.98 & 89.85 & 96.83 \\
& SpikeMLLM(QuaRot+MSTS+TC-LIF) & 6 & 4 & $\checkmark$ & 3/4 & 791 & 2198 & 76.98 & 89.85 & 96.83 \\
& SpikeMLLM(QuaRot+MSTS+TC-LIF) & 8 & 4 & $\checkmark$ & 3/4 & 785 & 2177 & 77.01 & 89.74 & 96.93 \\
\bottomrule
\end{tabular}
\end{adjustbox}

\vspace{24pt}

\centering
\caption{Ablation of MED-guided layer-wise timestep allocation on Qwen2VL-7B and InternVL2-8B, based on SpikeMLLM(QuaRot+MSTS+TC-LIF). MSTS-reverse denotes assigning higher timesteps to layers with lower MED under the same mixed timestep budget. The proposed MED-guided allocation consistently achieves better performance on these representative models.}
\label{tab:reverse}
\begin{adjustbox}{width=1\linewidth} 
\begin{tabular}{llcccccccc}
\toprule
Model & Method & Spiking & Time Step & OCRBench & MME & TextVQA & DocVQA & ScienceQA & Avg gap \\
\midrule
& FP16 & $\times$ & N/A & 835 & 2285 & 84.11 & 93.95 & 85.67 & — \\
& MSTS & \checkmark & 2.1/3.1 & 754 & 2070 & 78.99 & 86.49 & 80.17 & 7.91 \\
& MSTS-reverse & \checkmark & 2.1/3.1 & 717 & 2029 & 78.24 & 84.85 & 81.61 & 9.34 \\
\multirow{-4}{*}{Qwen2VL-7B}
& MSTS & \checkmark & 2.2/3.2 & 751 & 2068 & 79.63 & 87.52 & 80.66 & 7.51 \\
& MSTS-reverse & \checkmark & 2.2/3.2 & 742 & 1990 & 79.78 & 85.59 & 80.17 & 8.90 \\
& MSTS & \checkmark & 2.3/3.3 & 765 & 2109 & 80.97 & 89.12 & 81.95 & 5.86 \\
& MSTS-reverse & \checkmark & 2.3/3.3 & 752 & 2083 & 80.07 & 87.56 & 81.71 & 7.00 \\
\midrule
& FP16 & $\times$ & N/A & 795 &	2211 & 77.71& 91.05 & 97.07 & — \\
& MSTS & \checkmark & 2.1/3.1 & 740 & 2081 & 72.70 & 83.16 & 93.60 & 6.30  \\
& MSTS-reverse & \checkmark & 2.1/3.1 & 725 & 1983 & 71.60 & 80.19 & 93.65 & 8.49  \\
\multirow{-4}{*}{InternVL2-8B}
& MSTS & \checkmark & 2.2/3.2 & 741 & 2083 & 73.51 & 85.41 & 94.55 & 5.36  \\
& MSTS-reverse & \checkmark & 2.2/3.2 & 729 & 1999 & 72.10 & 80.89 & 93.65 & 7.96  \\
& MSTS & \checkmark & 2.3/3.3 & 741 & 2117 & 74.39 & 85.66 & 94.99 & 4.67  \\
& MSTS-reverse & \checkmark & 2.3/3.3 & 723 & 1980 & 73.63 & 81.49 & 93.36 & 7.82  \\
\bottomrule
\end{tabular}
\end{adjustbox}
\end{table*}

\newpage

\begin{figure*}[p]
\vspace*{\fill}
  \centering
  \includegraphics[width=0.95\linewidth]{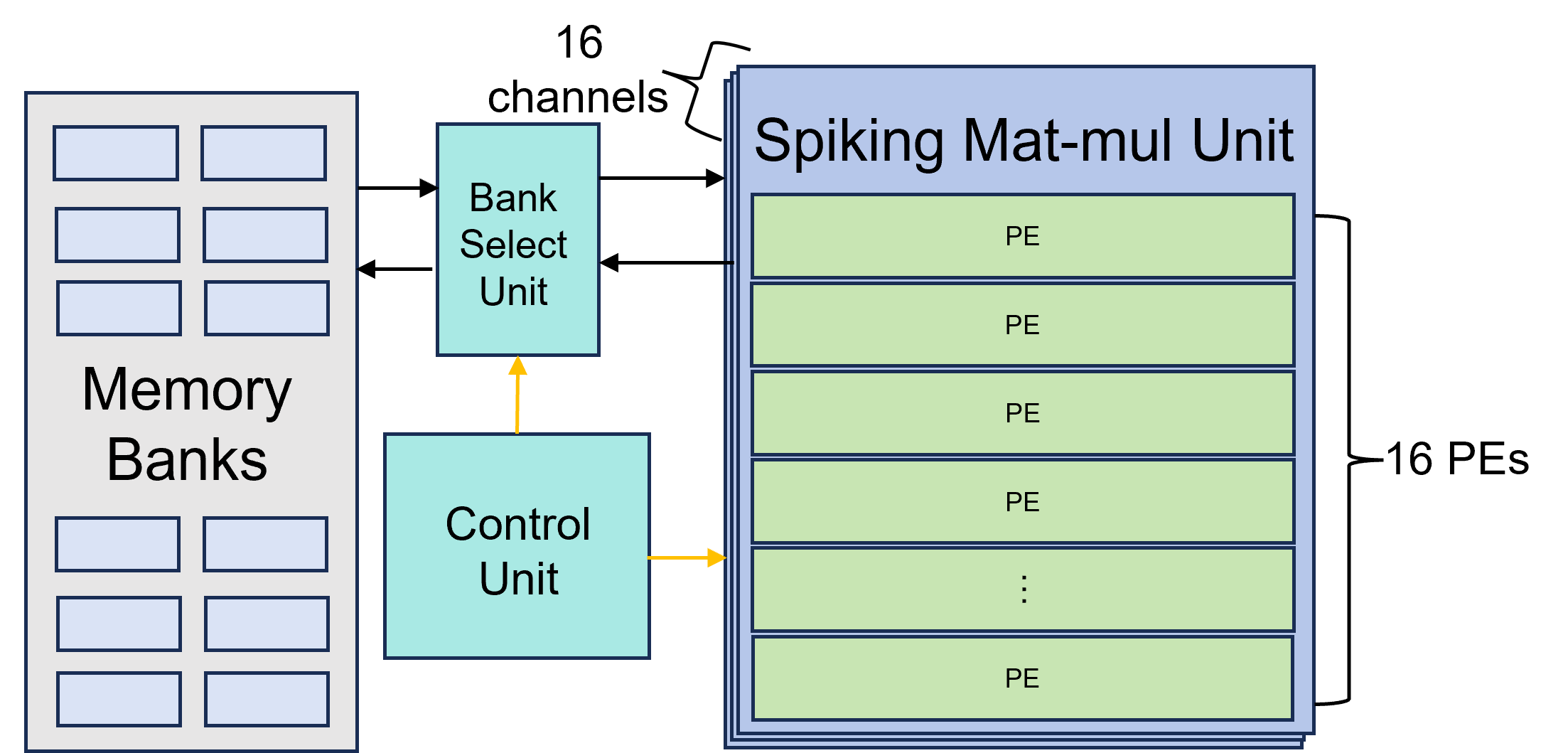}
  \caption{Overall Architecture of Co-design Accelerator}
  \label{fig:toparchw16}
  \vspace*{\fill}
\end{figure*}

\newpage

\begin{figure*}[h]
  \centering
  \begin{subfigure}[b]{0.48\linewidth}
    \centering
    \includegraphics[width=\linewidth]{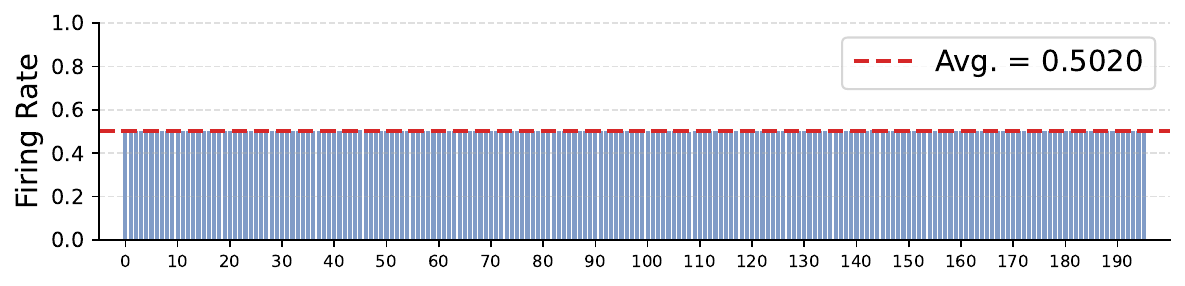}
    \caption{SpikeMLLM(RTN) ($T=255$)}
  \end{subfigure}
  \hfill
  \begin{subfigure}[b]{0.48\linewidth}
    \centering
    \includegraphics[width=\linewidth]{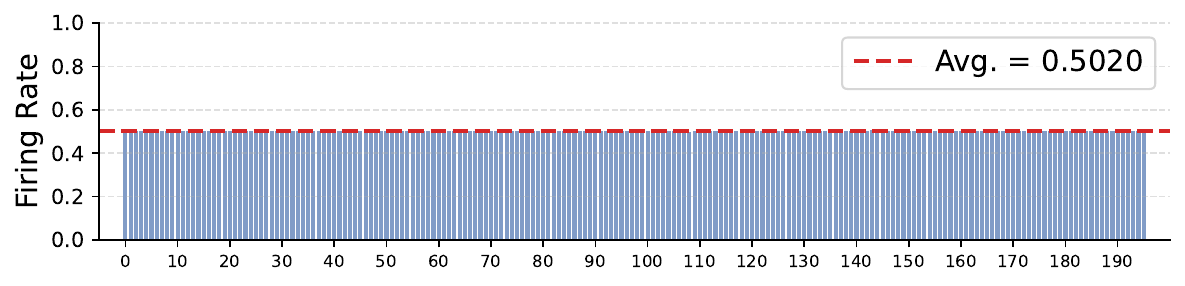}
    \caption{SpikeMLLM(GPTQ) ($T=255$)}
  \end{subfigure}

  \vspace{0.5em}
  \begin{subfigure}[b]{0.48\linewidth}
    \centering
    \includegraphics[width=\linewidth]{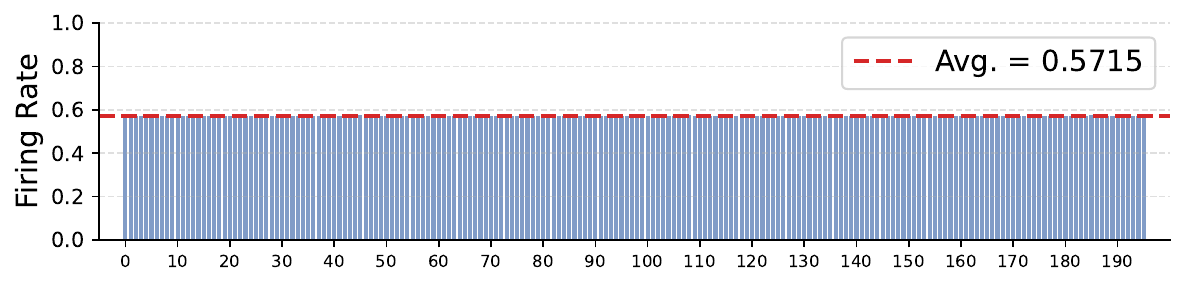}
    \caption{SpikeMLLM(QuaRot) ($T=7$)}
  \end{subfigure}
  \hfill
  \begin{subfigure}[b]{0.48\linewidth}
    \centering
    \includegraphics[width=\linewidth]{figs/qwen2vl_fr_per_layer_rot_15.pdf}
    \caption{SpikeMLLM(QuaRot) ($T=15$)}
  \end{subfigure}

  \vspace{0.5em}
  \begin{subfigure}[b]{0.48\linewidth}
    \centering
    \includegraphics[width=\linewidth]{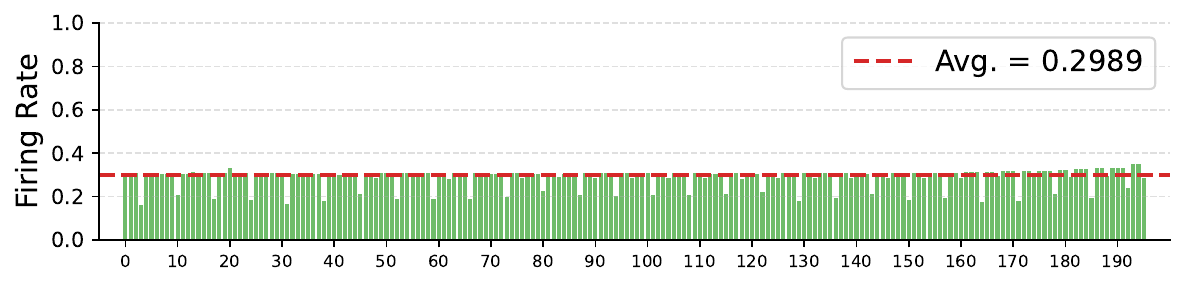}
    \caption{SpikeMLLM(QuaRot+MSTS+TC-LIF) ($T_v/T_t=2/3$)}
  \end{subfigure}
  \hfill
  \begin{subfigure}[b]{0.48\linewidth}
    \centering
    \includegraphics[width=\linewidth]{figs/qwen2vl_fr_per_layer_our34.pdf}
    \caption{SpikeMLLM(QuaRot+MSTS+TC-LIF) ($T_v/T_t=3/4$)}
  \end{subfigure}

  \caption{Per-layer firing rate distribution of different methods on Qwen2VL-7B.}
  \label{fig:firing_rate_full_qwen} 

\vspace{24pt}

  \centering
  \begin{subfigure}[b]{0.48\linewidth}
    \centering
    \includegraphics[width=\linewidth]{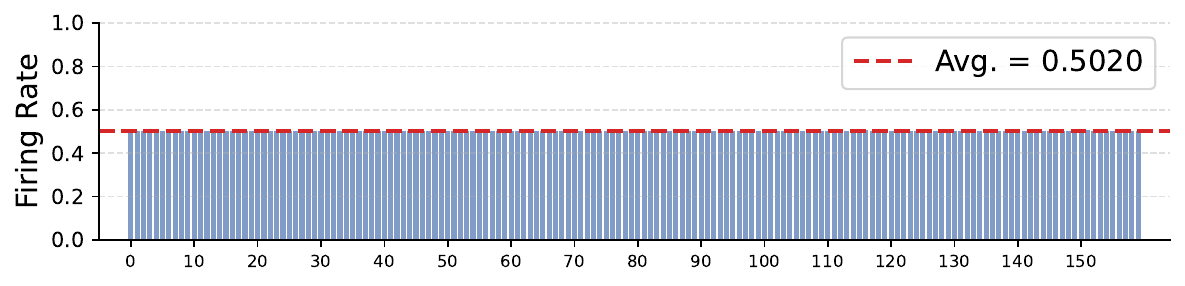}
    \caption{SpikeMLLM(RTN) ($T=255$)}
  \end{subfigure}
  \hfill
  \begin{subfigure}[b]{0.48\linewidth}
    \centering
    \includegraphics[width=\linewidth]{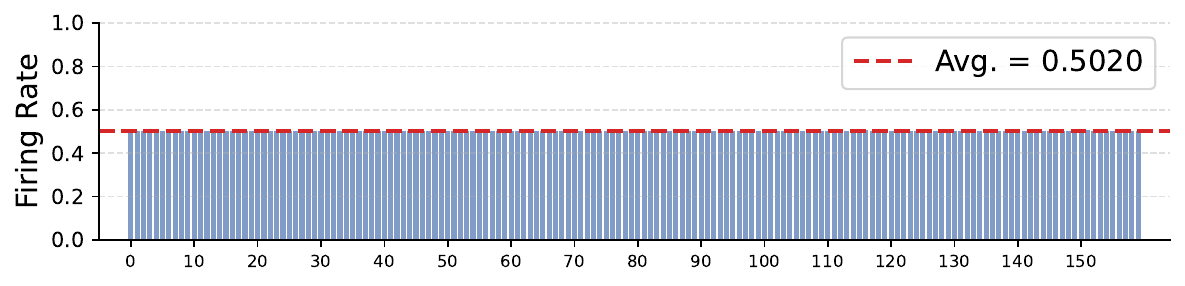}
    \caption{SpikeMLLM(GPTQ) ($T=255$)}
  \end{subfigure}

  \vspace{0.5em}
  \begin{subfigure}[b]{0.48\linewidth}
    \centering
    \includegraphics[width=\linewidth]{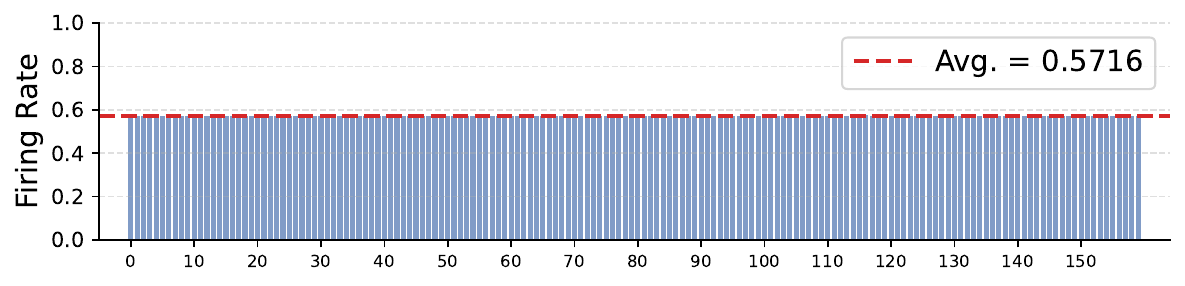}
    \caption{SpikeMLLM(QuaRot) ($T=7$)}

  \end{subfigure}
  \hfill
  \begin{subfigure}[b]{0.48\linewidth}
    \centering
    \includegraphics[width=\linewidth]{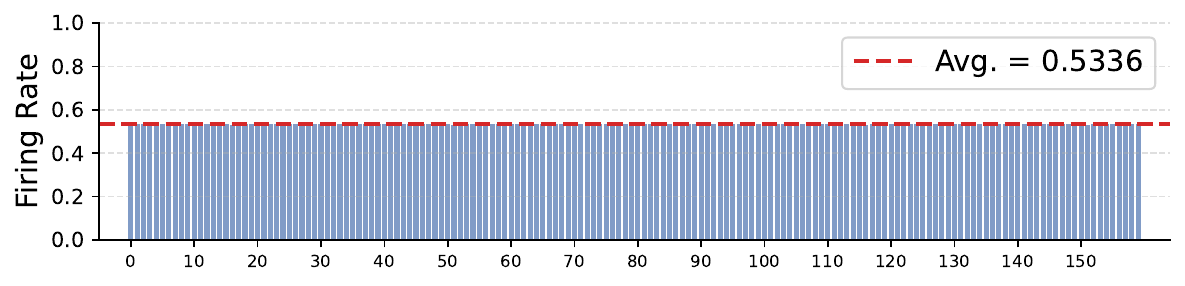}
    \caption{SpikeMLLM(QuaRot) ($T=15$)}
  \end{subfigure}

  \vspace{0.5em}
  \begin{subfigure}[b]{0.48\linewidth}
    \centering
    \includegraphics[width=\linewidth]{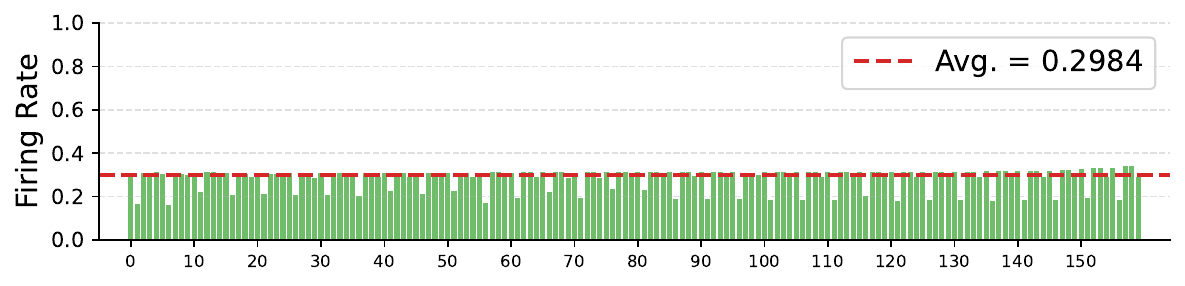}
    \caption{SpikeMLLM(QuaRot+MSTS+TC-LIF) ($T_v/T_t=2/3$)}
  \end{subfigure}
  \hfill
  \begin{subfigure}[b]{0.48\linewidth}
    \centering
    \includegraphics[width=\linewidth]{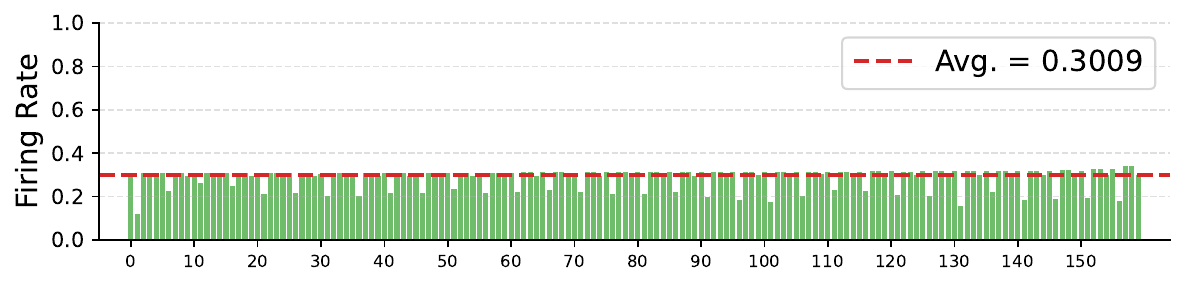}
    \caption{SpikeMLLM(QuaRot+MSTS+TC-LIF) ($T_v/T_t=3/4$)}
  \end{subfigure}

  \caption{Per-layer firing rate distribution of different methods on InternVL2-8B.}
  \label{fig:firing_rate_full_intern}
\end{figure*}

\newpage

\begin{figure*}[p]
\vspace*{\fill}
  \centering
  \includegraphics[width=0.99\linewidth]{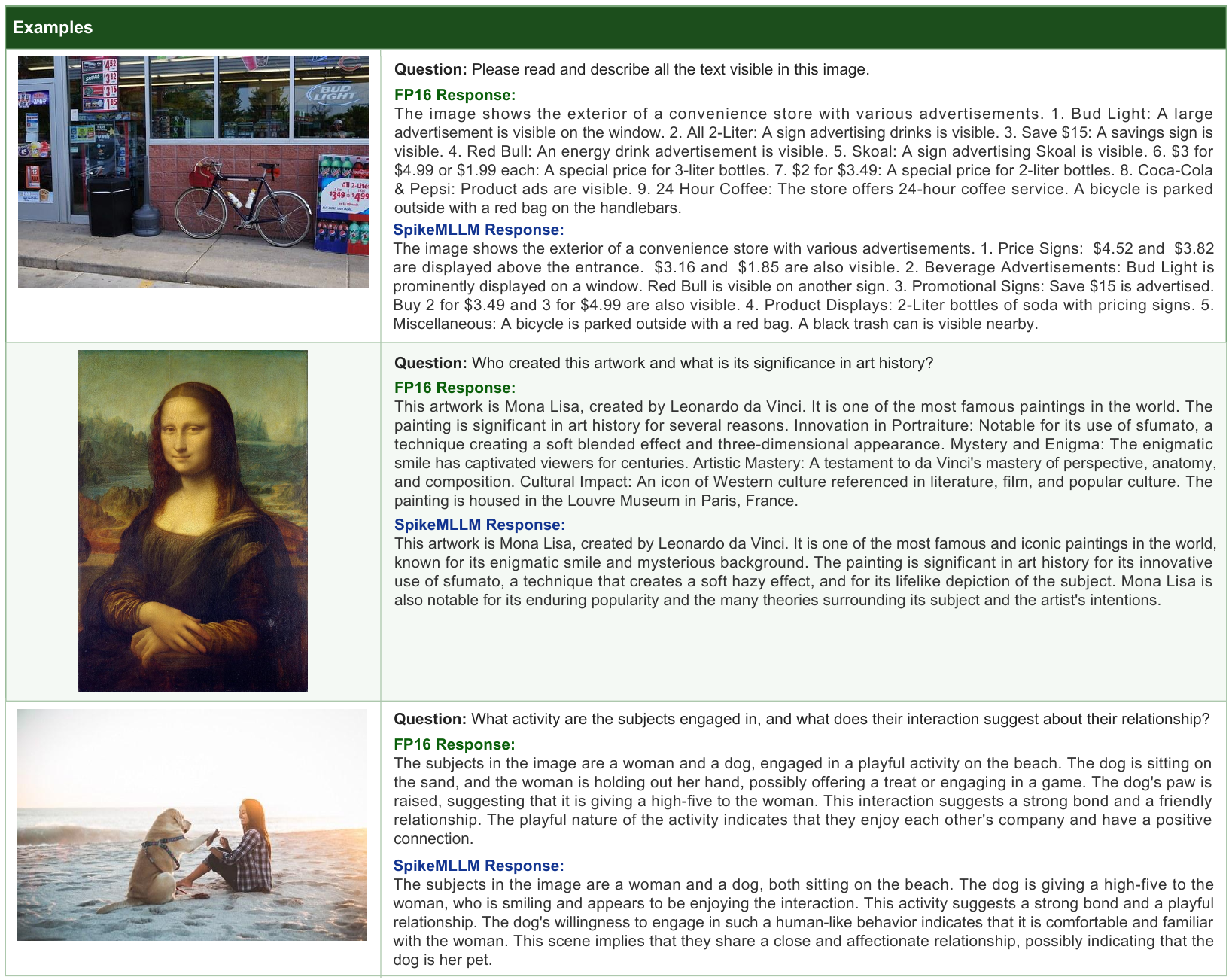}
  \caption{Example responses generated by SpikeMLLM(QuaRot+MSTS+TC-LIF, $T_v/T_t=3/4$) and the FP16 baseline on representative multimodal inputs. The responses remain highly consistent under aggressive timestep compression.}
  \label{fig:qualitative}
  \vspace*{\fill}
\end{figure*}

\end{document}